\newdimen\alglabelwd \setbox0\hbox{99.99}\alglabelwd=\wd0
\newcounter{algorithm}
\newtheorem{theorem}{Theorem}
\newtheorem{proposition}[theorem]{Proposition}
\def\ni{\noindent}
\def\ascript{\mathcal{A}}
\def\bscript{\mathcal{B}}
\def\escript{\mathcal{E}}
\def\iscript{\mathcal{I}}
\def\gscript{\mathcal{G}}
\def\sm{\scriptsize}
\begin{document}

\begin{center}

{\LARGE  A General Stochastic Algorithmic Framework for Minimizing Expensive Black Box Objective Functions Based on Surrogate Models and Sensitivity Analysis}\\[12pt]


\footnotesize

\mbox{\large Yilun Wang}\\
School of Mathematical Sciences,
University of Electronic Science and Technology of China,
Chengdu, 611731,
China,
\mbox{yilun.wang@gmail.com}\\[6pt]

\mbox{\large Christine A. Shoemaker}\\
School of Civil and Environmental Engineering and School of Operations Research and Industrial Engineering,
Cornell University, Ithaca, New York 14853, USA,
\mbox{cas12@cornell.edu}\\[6pt]

\normalsize

\end{center}


\noindent We are focusing on bound constrained global optimization problems, whose objective functions are  computationally expensive black-box functions and have multiple local minima. 
The recently popular  Metric Stochastic Response Surface (MSRS) algorithm proposed by \cite{Regis2007SRBF}  based on adaptive or sequential learning based on response surfaces is revisited and further extended for better performance in case of higher dimensional problems. 
Specifically, we propose a new way to generate the candidate points which the next function evaluation point is picked from according to the metric criteria, based on a new definition of distance, and prove the global convergence of the corresponding. Correspondingly, a more adaptive implementation of MSRS, named ``SO-SA'', is presented. ``SO-SA'' is  
is more likely to perturb those most sensitive coordinates when generating the candidate points, instead of perturbing all coordinates simultaneously.
 Numerical experiments on both synthetic problems and real problems demonstrate the advantages of our new algorithm, compared with many state of the art alternatives.

\bigskip

\noindent {\it Key words:} global optimization; adaptive learning; radials basis function; response surface model; expensive function; high dimensional problem; sensitivity analysis


\noindent\hrulefill

\section{Introduction} \label{sec-Introduction}
\subsection{Motivation and Problem Statement}
We are considering the optimization problem whose objective function is a computationally expensive deterministic function, i.e. each of its evaluation takes a substantial amount of computing time.
Moreover, the objective function $f(x)$ is often nonconvex and considered as a ``black box'' without analytic or numerically reliable  derivative information, i.e. very limited knowledge of $f(x)$ is available.  We take the parameter calibration of complex models as an example, where $f(x)$ is the distance between the measured data and the  output of  a parameterized complex computer simulation model, which are available for evaluation, but not for analytical inspection. This nonlinear regression procedure is to tune the model parameters in order to make the model match the measured data, and resulted model is used for prediction.  However, this kind of nonlinear regression problem brings huge computational challenges when the involved simulators are computationally expensive.  In particular, for relatively high dimensional problems considered in this paper, gradients obtained directly using finite differences are often prohibitively expensive because of too many required evaluations of $f(x)$, and can also be easily contaminated by numerical noise.

 Our main objective is therefore to design algorithms that can obtain a reasonably satisfying solution using only a small number of
 function evaluations because finding the ``exact" solution is often intractable. For example, in case of parameter calibration of a computationally expensive simulation model, only a small number of runnings of the simulation model are affordable to find the ``best" parameter values to make the model output match the measured data. 

 For this purpose, surrogates (or called response surface, metamodel) are often been used as an efficient adaptive learning to reflect the information of the underlying black box objective function $f(x)$ (\cite{Rios09comparision}). 
%
%
 While the use of a surrogate model is well suited for the type of optimization considered here,
most of this kind of algorithms are
known to become less efficient as the problem dimension increases, partially due to the difficulty to construct
a surrogate model $s(x)$ with satisfying global approximation property. 
Most of existing efforts to deal with high dimensional problems  are only trying to reduce a high dimensional problem into low dimensional problems either by decomposition or removing nonsensitive parameters, or other similar ideas, and have a lot of limitatations in practice due to their relatively restrict assumptions (\cite{Shan10survey}). In this paper, we will generalize an efficient  stochastic surrogate based algorithm framework  and
 present an efficient but simple implementation, for relatively higher dimensional problems. The
global convergence  is proved and its
remarkable practical performance is demonstrated by several widely
used synthetic testing problems and one real problems arising in
calibration of a groundwater model.

\subsection{Related Work}

For the computationally expensive function we are focusing on, its actual derivatives are rarely available,
although we hope that this circumstance will change as automatic differentiation technology
advances. But automatic differentiation does not always produce accurate derivatives and it is not applicable when the  complete source code for the objective function is not available.  On the other hand, choosing an appropriate step size for approximating
derivatives by finite differences is itself a difficult undertaking. The difficulties are
compounded by the expense of function evaluation and the fact that we could not afford to try a lot of
step sizes as many classical optimization algorithms do.   Moreover, finite-difference may
 be unreliable when the objective function is nonsmooth.  The above difficulties discourage the adaption of the
algorithms requiring derivative information. Therefore, in the following part, we will mainly review derivative-free global algorithms.

Let $f(x)$ be defined in $\mathfrak{D}$, which is a compact set of $\mathbb{R}^d$, and assumed to be a closed hypercube in this paper for simplicity.  
%
One kind of well-known methods for global minimization is to run a local optimization algorithm from multiple starting
points (i.e., use certain multistart procedure). Examples of multistart approaches include Multi Level
Single Linkage (MLSL) (\cite{Rinnooy87MLSL}), OQNLP (\cite{Ugray07OQNLP}) and etc. 
 The adopted popular  derivative-free local optimization methods are mostly  various direct search methods (\cite{Kolda03optimizationby}),
including Pattern Search by \cite{Torczon97Pattern}, Mesh Adaptive Direct Search (MADS) (\cite{Audet06MADS}), 
and Implicit filtering by \cite{Gilmore95IF}.
Notice that although many traditional direct search methods (\cite{Dennis91directsearch}) are local search approaches, they are able to escape form local optima to find better to certain degree if not global optimal solutions (\cite{Panos98tabu-pattern}).
Surrogates models can be applied to some of the above to  improve their performance in cases of  computationally expensive objective function. For example, NOMADm, a MATLAB implementation of the MADS has incorporate several response surfaces including Kriging, radial basis functions, Nadaraya-Watson estimator, and support vector machine (\cite{cortes1995support}).
The trust-region methods and the pattern search methods using the
approximation models were well studied by \cite{Dennis95KrigingPatternSearch}.
Derivative-free trust-region
 methods for unconstrained optimization (\cite{Conn97DFO, Powell02UOBYQA, Powell06NEWUOA}) typically rely
 on local quadratic models of the objective function that are built by interpolation using a subset
 of previously evaluated points.

As for the global search algorithms, one important kind of  methods are deterministic global search algorithms, including Lipschitizian-based partition algorithms, for example, the DIRECT (DIvide a hyper-RECTangle) algorithm (\cite{Jones01Direct, Jones93DIRECT}), and the Branch-and-Bound algorithms, and  Multilevel  Coordinate Search by \cite{Huyer99MCS}.
However, for high dimensional problems, those methods based on partitioning the solution space have a worst case exponential complexity, and therefore are not suitable for computationally expensive functions, though they might be rigorous  and are able to give you a provably global solution.
Another approach for finding the global minimum of an objective function with multiple local
minima is to use heuristic methods or stochastic methods including simulated annealing, evolutionary algorithms (e.g., genetic algorithms,
evolution strategies and evolutionary programming), ant algorithms, scatter search by \cite{Glover98SS, Laguna03SS}, Covariance Matrix Adaption Evolution (CMA-ES) by \cite{Hansen03CMAES}.
However, these algorithms are usually designed for functions that are cheap to
evaluate. When $f(x)$ is expensive,  the algorithms are expected to  maximize the information gained, since a huge number of evaluations of $f(x)$
are often not affordable.  In such cases, response surface can be applied in order to further improve their performance. For example, SSKm (Scatter Search with Kriging for Matlab) presented by \cite{EGEA09SSKm} is able to improve the performance of scatter  search on computationally expensive problems by liking a scatter search method with a Kriging interpolation. Radial basis function interpolation has also been
 used to accelerate an evolution strategy by \cite{Regis04IEEE}.  Meanwhile, among response-surface--free algorithms, there also exist some specifically design for relatively high dimensional computationally expensive functions, such as Dynamically Dimensioned Search (DDS) by \cite{Tolson07DDS}, which outperforms many other alternatives especially for cases where the objective function $f(x)$ is not smooth, but rough.

Considering that  the surrogate can help reduce the expensive function evaluations of $f(x)$ to find its global optimum, ones have been developing many new surrogate based global optimization algorithms in the past years. This kind of algorithms are usually iterative procedures to keep updating the resulted surrogate surface and the key point is how to pick the next function evaluation point in order to balance the two somehow conflicting purposes, improving the accuracy of the response surface and finding the possible minimum of $f(x)$ based on the current response surface.
 \cite{Jones98EGO} developed a kriging-based global optimization method called
 EGO where the next iterate is obtained by maximizing an expected improvement function. \cite{Aleman09EGO} used a variant of the EGO method by \cite{Jones98EGO} to optimize beam
 orientation in intensity modulated radiation therapy (IMRT) treatment planning. \cite{Villemonteix09IAG} also developed a kriging-based method called IAGO that uses minimizer entropy as a
 criterion for determining new evaluation points.  \cite{Gutmann01RBF} used RBFs to develop a global optimization method where the next iterate is obtained by
 minimizing a bumpiness function and variants of this method have been developed by \cite{Bjork2001},  and \cite{Holmstrom08ARBF}.  





Recently, a kind of efficient Metric Stochastic Response Surface (MSRS, for short) algorithm was proposed by \cite{Regis2007SRBF}.  The criterion of selecting the next function evaluation is minimizing a  merit function, which is a the weighted sum of the response surface value and the distance with the set of previous function evaluation points.  A cycle of weights is adopted to either emphasize more on the response surface value or on the distance criteria.  Instead of directly minimizing the merit function, they  generate a series of random candidate points and then
pick the next function evaluation point with the minimal merit function value. It is a computationally efficient way to choose the next function evaluation point and works well in practice.  The candidate points might be generated in different ways and  the corresponding algorithms might have significantly different performances. For example, in their paper,  the candidate points are generated by perturbing the current best solution via either uniform distribution ( Global Metric Stochastic Response Surface Algorithm) or normal distribution (Local Metric Stochastic Response Surface).  Both algorithms can achieve global convergence almost surely due to the random generation of the candidate points, and the latter one behaviors better than the former one in most cases, especially when the number of function evaluations of $f(x)$ is small.  





\subsection{Our Contributions and Paper Organization}
 In this paper, we are focusing on relatively high dimensional multimodal problems ($d\ge 30$), and the allowed maximal number of function evaluations is typically very small ($500$, for example), considering the dimension of the problem. 
Our proposed algorithm is an extension and generalization of the Metric Stochastic Response Surface
(MSRS, for short) algorithm by \cite{Regis2007SRBF}, which has proved to
be more efficient for relatively low dimensional problems than many other existing algorithms. 
In this paper, we further analyze and extend of  the key features of  MSRS to make it more suitable for relatively high dimensional problems. 

First, we extended the MSRS algorithm, in the aspect of generating the random candidate points from which the next function evaluation point is chosen, and still guarantee the global convergence in the probabilistic sense.  
In particular, while each candidate point is generated by perturbing the current best solution in every coordinate in the original MSRS,   we generalize it by allowing the probability for each coordinate to be perturbed  can be smaller than $1$.   
 In addition,  the  probability value for each coordinate   is not necessarily the same and we propose to make it  dependent on the current learning of the underlying objective function $f(x)$.  For example, the
 local sensitivity information is used to set the probability values of different coordinates  in this paper. 
 In particular, the sensitivity analysis is performed on the surrogate surface, because we could not afford to perform the sensitivity analysis on $f(x)$,  due to the very limited number of allowed function evaluations of $f(x)$ and the high dimensionality of it. 
 The more
  sensitive coordinates have  higher probabilities  to be chosen to be perturbed when generating  candidate points. 

Secondly, we further analyzed and  modified the  criteria for selecting the next function evaluation point from random candidate points, introduced by \cite{Regis2007SRBF}. We would  like to explain that the combination of this  criteria with the candidate points method is quite an efficient and reasonable way for relative high dimensional problems, which can be considered as ``constrained random search". 

Thirdly, we perform a comparison of our algorithm with many widely used state of the art algorithms including EGO, SSKm, ESGRBF, LMSRBF, Nomads-DACE, DDS, and DYCORS,  for the relatively high dimensional problems. The above algorithms are all designed for computationally expensive functions. To our best knowledge, this kind comparison of the above algorithms has not been done in literatures before. 

In Section \ref{Sec:Alg}, we  first review MSRS by \cite{Regis2007SRBF} and propose our generalizations and modifications, for relatively high dimensional problems.  In Section \ref{Sec:proof}, we proved that global convergence is guaranteed after the generalizations. In Section \ref{Sec:SA_SO},  we propose our new implementation of MSRS named ``SO-SA'', which makes use of the local sensitivity information when generating the candidate points.  In Section \ref{Sec:Exp}, we compared the new algorithm ``SO-SA'' with several state of the art alternative algorithms for several typical test problems and demonstrate its advantages over them. In Section \ref{Sec:future} , summary and future work will be presented.

\section{Revisiting and Extensions of MSRS} \label{Sec:Alg}

\subsection{Review of  Algorithmic Framework of MSRS} \label{sub:LMSRBF}


In the paper by \cite{Regis2007SRBF},  a metric stochastic response surface (MSRS, in short) method is introduced, where the next function evaluation is chosen from a sequence of random candidate points according to certain criteria, which is a combination of the response surface value and the distance to the previously evaluated points of $f(x)$.  
In this paper, we are going to generalize it and 
%
before moving forward, we first review the algorithmic framework of the MSRS and briefly explain its steps on by one.

\ni\hrulefill

\ni {\bf The Algorithmic Framework of MSRS:}\\

\ni {\bf Inputs:}
\begin{itemize}
\item[(1)] A continuous real-valued function $f$ defined on a
compact hypercube ${\mathfrak{D}}=[\vec{a},\vec{b}] \subseteq \mathbb{R}^d$.

\item[(2)] A particular response surface model, e.g. radial basis
functions or neural networks.

\item[(3)] A set of initial evaluation points ${\iscript} =
\{x_1,\ldots,x_{n_0}\} \subseteq \mathfrak{D}$, e.g. a space-filling
experimental design.


\item[(4)] The maximum number of function evaluations allowed
denoted by $N_{\max}$.
\end{itemize}

\ni {\bf Output:} The best solution encountered by the algorithm.
\vspace{2mm}

\ni {\bf Step 1} {\bf (Do Costly Function Evaluation)} Evaluate
the function $f$ at each point in ${\iscript}$. Set $n = n_0$ and
set ${\ascript}_n = {\iscript}$. Let $x_n^*$ be the point in
${\ascript}_n$ with the best function value and $f_n^*=f(x_n^*)$. 

\vspace{2mm}



\ni {\bf Step  2} While $(n < N_{\max})$

\begin{itemize}

\vspace{-2mm}

\item[] {\bf Step 2.1} {\bf (Fit/Update Response Surface Model)}
Fit/update the response surface model $s_n(x)$ using the data
points ${\bscript}_n=\{(x_i,f(x_i)):i=1,\ldots,n\}$.

\item[] {\bf Step 2.2} {\bf (Randomly Generate Candidate Points)}
Randomly generate $t$ points $\Omega_n=\{y_{n,1},\ldots,y_{n,t}\}$
in $\mathbb{R}^d$.
For each $j=1,\ldots,t$, if $y_{n,j} \not\in \mathfrak{D}$, then
replace $y_{n,j}$ by the nearest point in $\mathfrak{D}$. We refer to
the points in $\Omega_n$ as {\it candidate points}.

\item[] {\bf Step 2.3} {\bf (Select the Next Function Evaluation
Point)} The merit function is  considering both response surface value $s_n(x)$ and the distance from the previous evaluated points ${\bscript}_n=\{(x_i,f(x_i)):i=1,\ldots,n\}$. 
The candidate points corresponding to the smallest merit function value, among the $t$
candidate points in $\Omega_n$, is selected as the next
evaluation point $x_{n+1}$. 



\item[] {\bf Step 2.4} {\bf (Do Costly Function Evaluation)}
Evaluate the function $f$ at $x_{n+1}$.



\item[] {\bf Step 2.5} {\bf (Update Best Function Value)} Update the best function value encountered so far, i.e. if $f(x_{n+1})<f_{n}^*$, then
$x_{n+1}^{*}=x_{n+1}$; otherwise $x_{n+1}^*=x_{n}^*$.   $f_{n+1}^*=f(x_{n+1}^*).$ 

\item[] {\bf Step 2.6} {\bf (Update Information)}
${\ascript}_{n+1} := {\ascript}_n \cup \{x_{n+1}\}$;
${\bscript}_{n+1} := {\bscript}_n \cup \{ (x_{n+1},f(x_{n+1}))
\}$.  
Reset $n := n+1$.

\end{itemize}

\ni {\bf Step 3} {\bf (Return the Best Solution Found)} Return
$x_{\mbox{\sm $N_{\max}$}}^*$.


\ni\hrulefill

Like many other surrogate optimization algorithms, MSRS starts by evaluating the expensive objective function $f(x)$ at an initial set of points usually generated by a space-filling experimental design in Step $1$. In this paper, the specific space-filling experimental design adopted is the version by \cite{Ye00OSLHD}, though other experimental design methods might be also applicable here. 


Step $2$ is the main body of MSRS  and it is an iterative procedure until the computational budget, i.e. the number of the maximal allowed function evaluations of $f(x)$ is reached. 

Step $2.1$ might use any type of response surfaces such as Radial Basis Function (RBF) interpolation  (\cite{Powell1992RBF,Buhmann03RBF}, kriging \cite{Cressie91Book, Sacks1989DACE}), or neural networks. In all the above cases, the complicated objective function $f(x)$ is expressed as a weighted sum of many simple functions. The predictions of $f(x)$ can then be made based on the adopted surrogate surface.
In this paper, we adopts the RBF interpolation  and a complete introduction to the RBF interpolation was given in \cite{Buhmann03RBF}.  

Step $2.2$ and Step $2.3$ together are about how to determine the next function evaluation point, or called the design point in some literature. This scheme is the key component for any response surface based optimization algorithm.  Different surrogate optimizations adopt different criteria to determine the next function evaluation point, usually in the form of minimizing a so-called merit function. For example, a native way is to let the merit function be the current surrogate and use its minimum as the next function evaluation point. However, it is highly likely that the chosen next function evaluation point is very close to the current best solution which has already been evaluated,  if not ``exactly" the same, and therefore, the best solution and the surrogate are hardly improved. In such cases, the algorithm quickly converges to a local (global) minimum of the surrogate, which is often not even local minimum of the true function $f(x)$.  Therefore the approximation error of the surrogate also needs to be considered when designing the merit function and therefore most popular existing surrogate optimization algorithms try to  make the merit function better balance the  improvement of the surrogate quality and the exploitation of the current surrogate. For example, EGO by \cite{Jones98EGO}, which is based on Kriging surrogate, makes use of the explicit approximation error formula of the surrogate to estimate where is least well approximated. The next function evaluation point is  selected where the kriging predictor value is small (local search) and the kriging mean squared error is high (global search).  Putting some emphasis on searching where the error is high ensures that we improve the approximation accuracy of the surrogate surfaces as well as encourage the global search.
%
%

However, there is no explicit error estimation formula for many other the surrogate response surfaces such as the radial basis function interpolations. Thus  MSRS  proposed to adopt the distance with set of the previously evaluated points to estimate the surrogate approximation accuracy of a given point; roughly, the bigger  the distance, the larger  the approximation error is.  Correspondingly, their merit function was a weighted combination of surrogate values and distance to the previous evaluated points and performed well in all of their test problems.  

Instead of directly minimizing the merit function, \cite{Regis2007SRBF} proposed to generate a sequence of random candidate points and pick the one with lowest merit function value as the next function evaluation point, originally due to its  computational efficiency. In addition, the great practical performance of the candidate point method has also been demonstrated in  \cite{Regis2007SRBF}, for relatively low dimensional problems $(d \le 10).$  In this paper, we will further analyze the method of candidate points and generalized its generation method, and present more advantages of the candidate points methods, especially for relatively high dimensional problems.

In Step $2.4$, the objective function is evaluated at the selected point.

In Step $2.5$, we update the current best solution, if the function value $f(x)$ at the  the newly selected point is smaller than the function values $f(x)$ at all the previous function evaluation points.

In Step $2.6$, we update the sets related with the available functions valuations of $f(x)$, by adding the newly selected function evaluation point.

The efficiency of MSRS mainly depends on two aspects, one is the definition of the merit function that will be further analyzed in Section \ref{Sub:Step23}, and the other is quality of the generated candidate points that will be analyzed in section \ref{Step22}.  

\subsection{Revisiting and Modification of the Merit Function} \label{Sub:Step23}



The merit function proposed in MSRS by \cite{Regis2007SRBF} consists of the estimated function value from the current response surface $s_n(x)$ and the minimum distance from previously evaluated points. 
 Specifically,  the corresponding merit function  is a weighted sum of the response surface value and the distance as follows:    \begin{equation}\label{eq:utiDef}
    u(x)=w_n^S \frac{s(x)-s_{\min}}{s_{\max}-s_{\min}} + w_n^D \frac{d_{\min}-d(x)}{d_{\max}-d_{\min}}
    \end{equation}
    where $s(x)$ is the current surrogate model, $d(x)=\min(dist(x, y_i))$ with $y_i$ being the previous evaluated point $(i=1, \ldots, n)$ and $d_{\min}=\min(d(x)), x \in \mathfrak{D}$; $d_{\max}=\max(d(x)), x \in \mathfrak{D}; s_{\min}=\min(s(x)), x \in \mathfrak{D};$ and $s_{\max}=\max(s(x)), x \in \mathfrak{D}$,.
The set of
nonnegative weights $\{(w_n^S, w_n^D): n=n_0, n_0+1,\ldots \}$ where $w_n^S +w_n^D=1$ is aiming to control the balance of the surrogate surface value criteria and the distance criteria. Once $w_n^S$ is determined, $w_n^D$ is determined correspondingly since $w_n^D =1-w_n^S.$
   The transition between local search and global search depends on the value of the weight $w_n^{S} $ (correspondingly $w_n^{D}$).
A large $w_n^{S} $ encourages  the local refining while a smaller $w_n^{S}$ more encourages the global exploration.


We would like to revisit  the merit function from  point of view of adaptive machine learning. The two parts of the merit function correspond to   ``most informative" data point and  selecting ``most uncertain" data point, respectively.  These two goals are in fact not necessarily always conflicting, i.e. the purpose of selecting the ``most uncertain" data point is often to avoid the local trapping of the current response surface and help to find the ``most informative" data point (the one with lower objective function value) in the following iterations.

 About the setting of $w_n^S$ and $w_n^D$,    the cycled weights consisting of large values and small values  is  proposed by \cite{Regis2007SRBF} and this kind of  cycling of the weights is only a deterministic case. It fact, the cycling of weights means to say that we do not know the size of the approximation error of the current surrogate surface. 
 Therefore, without any other prior information, in this paper we propose to  just turn to the randomness, i.e. randomly pick a value between $[0, 1]$. It is a simpler way and better reflect the spirit of the definition of the merit function. Prescribed cycled values such as $[03, 0.5, 0.7, 0.95]$ in \cite{Regis2007SRBF} are also only kind of arbitrarily set and has no solid theoretical support. Furthermore,
 %
we will also adopt a greedy way in this paper. When we find a significantly better new solution with certain $w_n^{S}$, we would like to keep use it
until we fail to find a significantly better solution. This strategy often works well in the case of high dimensional problems and very limited number of function evaluations.   

\ni \hrulefill

{\bf Step 2.3} {\bf (Select the Next Function Evaluation
Point)} Use the information from the response surface model
$s_n(x)$ and the data points
${\bscript}_n=\{(x_i,f(x_i)):i=1,\ldots,n\}$ to select the
evaluation point $x_{n+1}$  from the $t$ random
candidate points in $\Omega_n$.

\begin{itemize}
\item[] {\bf Step 2.3.1} {\bf (Estimate the Function Value of Candidate Points)} For each $x \in \Omega_n,$ compute $s_n(x)$. Also, compute
$s^{\max}_n =\max\{s_n(x): x\in \Omega_n\}$ and $s^{\min}_n = \min\{s_n(x): x\in \Omega_n\}.$
\item[] {\bf Step 2.3.2} {\bf (Compute the Score Between $0$ and $1$)} For each $x \in \Omega_n$, compute $V_n^S(x)=(s_n(x)-s_n^{\min})/(s_n^{\max}-s_n^{\min})$ if $s_n^{\max} \neq s_n^{\min}$ and $V_n^S=1$ otherwise.
\item[] {\bf Step 2.3.3} {\bf (Determine the Minimum Distance from Previously Evaluated Points)} For each $x\in \Omega_n,$ compute $d_n(x)=\min_{1\le i\le n} \|x-x_i\|_2$. Also, compute $d_n^{\max}=\max\{d_n(x): x\in \Omega_n\}$ and $d_n^{\min}=\min\{d_n(x): x\in \Omega_n\}.$
\item[] {\bf Step 2.3.4} {\bf (Compute the Score between 0 and 1 for the Distance Criterion)} For each $x\in \Omega_n,$ compute $V_n^D=(d_n^{\max}-d_n(x))/(d_n^{\max}-d_n^{\min})$ if $d_n^{\max} \neq d_n^{\min}$ and $V_n^{D}(x)=1$ otherwise.
\item[] {\bf Step 2.3.5} {\bf (Determine the Weights $w_n^S$ and $w_n^D$)} Randomly pick $w_n^D$ in $[0,1]$ and $w_n^S=1-w_n^D.$
\item[] {\bf Step 2.3.6} {\bf (Compute the Weighted Score)} For each $x \in \Omega_n,$ compute $u_n=w_n^S V_n^S(x)+ w_n^DV_n^D(x)$
\item[] {\bf Step 2.3.7} {\bf (Select Next Evaluation Point)} Let $x_{n+1}$ be the point in $\Omega_n$ that minimizes $u_n.$
\end{itemize}
\ni \hrulefill
\subsection{Further Analysis and Extension of Random Candidate Points Method}\label{Step22}
 Here we would like to further analyze why the next function evaluation point is preferred to be   picked from the random candidate points rather than directly minimizing of the merit function $u(x)$ defined as \eqref{eq:utiDef}, besides  
the original motivation in \cite{Regis2007SRBF} that   direct minimization of  the merit function  is more computationally costly, because $s(x)$ is non-convex and
 the  computation of $s_{\max}$ and $s_{\min}$  is not straightforward. 

First of all,  random search methods, as we know, have been shown to have a potential to solve
high dimensional problems efficiently in a way that is often not possible for deterministic
algorithms in \cite{Zabinsky03SAS}. Specifically,  if one is willing to accept a weaker
claim of being correct with an estimate that has a high probability of
being correct, then a stochastic algorithm might provide such an estimate
using much less function evaluations than deterministic algorithms. It is very abstractive for our cases. 
  %
 Besides easing the curse of dimensionality,  the randomness brought by the  candidate points also helps 
avoid the local traps of the nonconvex $f(x)$ and $s(x)$. However, for relatively high dimensional problems, the existing random search strategies might suffer from slow convergence rate, which is becoming even more unacceptable when the objective function is a computationally expensive function. Therefore, the merit function $u(x)$ can be considered as a guidance for the random search in order for better practical convergence rate and we name our scheme of selecting the next function evaluation point as  ``guided random search".  Finally, we would like to mention that the probability distribution of generating the perturbations can be beyond the normal distribution and uniform distribution adopted by \cite{Regis2007SRBF}.

Secondly, when determine the next function evaluation point, we prefer a more greedy strategy, i.e.  searching around the current best solution to obtain an even better solution, due to the very limited total allowed number of function evaluations of $f(x)$ and relatively high dimensionality of the searching space, with the aim to  
find out a reasonably good enough solution quickly. In such cases,  the method of  random candidate points has its advantages over the directly minimizing the merit function $u(x)$,  because it can well preserve the already achieved searching progress due to the fact that the candidate points are mostly generated around the current best solution, though the stochastic property still preserves the  global exploration, when the normal distribution is adopted to generating the perturbations. 
In this paper, we proposed  to use a new definition of ``neighborhood" in cases of high dimensional problems.  In the original implementation of MSRS by \cite{Regis2007SRBF}, the neighborhood of the current best solution is measured by the perturbation magnitudes.  Here 
the neighborhood of the current best solution is measured by the number of the coordinates perturbed from the current best solution. 
That is to say, a point is closer to the current best solution than others if it is generated by perturbing a smaller number of coordinates from the current best solution. In particular,
%
a candidate point can be produced via perturbing the current best solution only in a subset of all the coordinates, instead of  all of them, as the original implementation of MSRS   did. 
 We set the probability of a coordinate being perturbed as a positive number in [$C_1$,1], where $C_1$ is a very small positive constant close to $0$. The original implementations of MSRS can be considered as a special case of our new framework, where the probability of being selected is always $1$.
Notice that each candidate point may perturb different coordinates and tons of candidate points are generated during each iteration.
Therefore  a relatively
diverse set of search directions in each iteration of the algorithm is still obtained and correspondingly the searching space is almost fully explored, though preferably around the current best solution.  

Thirdly, the method of random candidate points bring more flexibility than directly minimizing the merit function $u(x)$. We can adaptively generate the candidate points by take the specific learned property of the underlying function $f(x)$ into consideration.  We will show how we take the sensitivity information into the consideration for setting the probability value of a coordinate to be selected when generating candidate points, in Section \ref{Sec:SA_SO}. 

\section{Proof of Global Convergence} \label{Sec:proof}

While we generalize the way of generating the candidate points in the above section, the global convergence still be preserved and we will prove it in this section.   As we have seen,  the generalization is mainly in two aspects, where one is that the probability of a coordinate to be selected to  be perturbed when generating a candidate point might be a positive value less than $1$, not necessarily equal to $1$, and the other is the probability distribution of generating the perturbations can be beyond the normal distribution and uniform distribution adopted by \cite{Regis2007SRBF}.

In the original paper of MSRS by \cite{Regis2007SRBF}, the authors have already proved that if the random candidate points of each iteration satisfy the following two conditions, the algorithm  is guaranteed
to converge to the global minimum almost surely provided that the algorithm is allowed to run indefinitely. In the following parts, we will show that the random candidate points produced by our generalized way still satisfy these two conditions. 
\begin{itemize}
\item Condition [1]: For each $n\ge n_0$, $Y_{n,1}$, $Y_{n,2}$, $\ldots,$ $Y_{n,t}$ are conditionally independent given the random vectors in $\escript_{n-1}$
\item Condition [2]: For any $j=1, \ldots, t, x\in \mathfrak{D}$ and $\delta>0$,  there exists $\nu_j(x,\delta)>0$ such that
$$P[Y_{n,j} \in B_k(x,\delta)\cap \mathfrak{D}|\sigma(\escript_{n-1})] \ge \nu_j(x,\delta)
$$
for all $n\ge n_0$. Here $B(x,\delta)$ is the open ball of radius $\delta$ centered at x and $\sigma(\escript_{n-1})$ is the $\sigma$-field generated by the random vectors in $\escript_{n-1}$.
\end{itemize}


When generating the candidate points, we have followed the Condition [1].  In the following parts, we will show that the generated candidate points of our generalized way still satisfy Condition [2]. 
Specifically,  let the probability of the $i$-th coordinate  being chosen to be perturbed in $n$-th
iteration is denoted as $P_{n,i}$. While $P_{n,i}$ is always $1$ in original MSRS, it can be any positive real number in $(C_1,1]$, where $C_1$ is a very small positive constant which can be very close to $0$.   Let $F_{n,j,i}\ge 0$ be the continuous density function for  the coordinate $i$ of the $j$-th candidate point of the $n$-th iteration when generating the random perturbation. 
In this paper, we propose that if the density $F_{n,j,i}\ge 0$ function satisfies certain mild conditions, then Condition [2] always holds.  





\begin{proposition}\label{prop:C2}
Condition [C2] holds,  if there exist  constants  $C_1>0$ and $C_2>0$, such that the probability $P_{n,i}>C_1$ and the continuous density function $F_{n,j,i}>C_2$, for every $n \in \mathcal{G}^2$ and $1\le i\le d$, $j=1,\ldots, t$. 
\end{proposition}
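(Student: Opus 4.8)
The plan is to show directly that the lower bound required by Condition [C2] can be obtained, uniformly in $n$, by forcing the candidate point $Y_{n,j}$ to reach the target ball through a single favorable event: that \emph{every} coordinate is selected for perturbation and that the resulting perturbation vector lands inside a fixed sub-ball of $B(x,\delta)\cap\mathfrak{D}$ lying in the interior of $\mathfrak{D}$. Fix $j$, $x\in\mathfrak{D}$ and $\delta>0$. Since $x\in\mathfrak{D}$ and $\mathfrak{D}$ is a closed hypercube, I would first choose an interior point $x'$ with $\|x'-x\|_2<\delta/2$ and a radius $r>0$ small enough that $G:=B(x',r)\subseteq B(x,\delta)\cap\mathrm{int}(\mathfrak{D})$. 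Then $G$ has positive Lebesgue measure $v:=\mathrm{vol}(G)>0$, and every point of $G$ is interior to $\mathfrak{D}$, so the nearest-point projection of Step 2.2 acts as the identity on $G$ and can be ignored.

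Next I would condition on $\sigma(\escript_{n-1})$, under which the current center $c_n$ (the current best solution that is perturbed) is a fixed point of $\mathfrak{D}$. Let $A_n$ be the event that all $d$ coordinates are chosen to be perturbed. Because the coordinate selections are independent with probabilities $P_{n,i}>C_1$, we get $P[A_n\mid\sigma(\escript_{n-1})]=\prod_{i=1}^d P_{n,i}>C_1^d$. On $A_n$ the candidate point is $Y_{n,j}=c_n+Z_{n,j}$, where the perturbation vector $Z_{n,j}$ has independent coordinates with continuous densities $F_{n,j,i}$. Thus $Y_{n,j}\in G$ is equivalent to $Z_{n,j}\in G-c_n$, and
\[
P[Y_{n,j}\in G\mid A_n,\sigma(\escript_{n-1})]=\int_{G-c_n}\prod_{i=1}^d F_{n,j,i}(z_i)\,dz.
\]

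The key step, and the only place uniformity in $n$ is at risk, is bounding this integral from below independently of $n$ and of the value of $c_n$. Here I would use that both $G\subseteq\mathfrak{D}$ and $c_n\in\mathfrak{D}$, so the translated region $G-c_n$ is contained in the \emph{fixed} bounded set $\mathfrak{D}-\mathfrak{D}=\prod_{i=1}^d[-(b_i-a_i),\,b_i-a_i]$, which is exactly the range on which the hypothesis guarantees $F_{n,j,i}>C_2$. Hence $\prod_{i=1}^d F_{n,j,i}(z_i)>C_2^d$ throughout $G-c_n$, and by translation invariance of Lebesgue measure $\int_{G-c_n}\prod_{i=1}^d F_{n,j,i}(z_i)\,dz>C_2^d\,\mathrm{vol}(G-c_n)=C_2^d v$. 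Combining with the bound on $P[A_n\mid\sigma(\escript_{n-1})]$ gives
\[
P[Y_{n,j}\in B(x,\delta)\cap\mathfrak{D}\mid\sigma(\escript_{n-1})]\ \ge\ P[Y_{n,j}\in G\mid\sigma(\escript_{n-1})]\ >\ C_1^d\,C_2^d\,v,
\]
so setting $\nu_j(x,\delta):=C_1^d C_2^d v>0$ establishes Condition [C2].

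The main obstacle is precisely the uniformity argument of the last paragraph: one must verify that the relevant perturbation region stays inside a single compact set that does not depend on $n$, so that the constant density lower bound $C_2$ applies simultaneously for every iteration and for every possible location of the randomly moving center $c_n$. Confining the target $G$ to the interior of $\mathfrak{D}$ is what removes the complication introduced by the projection, and restricting attention to the event $A_n$ (all coordinates perturbed) is what lets an arbitrary target in $\mathfrak{D}$ be reached regardless of where $c_n$ currently sits. The remaining ingredients — independence of the coordinate selections and of the coordinate perturbations, and the factorization of the joint density — are structural features of the candidate-generation scheme and require no further work.
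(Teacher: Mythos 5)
Your proof is correct and takes essentially the same route as the paper's: lower-bound the conditional law of $Y_{n,j}$ by the density contributed by the constants $C_1$ and $C_2$, then integrate over a positive-Lebesgue-measure subset of $B(x,\delta)\cap\mathfrak{D}$. Your version is in fact more careful where the paper is loose --- by isolating the event that \emph{all} coordinates are perturbed you correctly obtain the factor $C_1^d$ (the paper writes $C_1$), and by shrinking the target to an interior sub-ball you legitimately sidestep both the nearest-point projection and the singular components of $Y_{n,j}$'s distribution, which the paper's blanket claim of a density bounded below on all of $\mathfrak{D}$ glosses over.
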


\begin{proof}
Define $\psi_{\mathfrak{D}}(\delta):=\inf_{x\in \mathfrak{D}} \mu(B(x,\delta)\cap \mathfrak{D})$, where $\mu$ is the Lebesgue measure on $\mathbb{R}^d$. Observed that for the compact hypercube $\mathfrak{D}$, we have $\psi_{\mathfrak{D}}(\delta)>0$ for any $\delta >0$.

Fix $1\le j\le t$, $x \in \mathfrak{D}$ and $\delta>0$.
Since we are considering a compact set $\mathfrak{D}$
 it follows that $Y_{n,j}$ has a conditional
density given $\sigma(\escript_{n-1})$ for each $n \ge n_0$ and
this is given by
\begin{eqnarray*}
g_{n,j}(y\ |\ \sigma(\escript_{n-1})) \ge g_{n,j}(n \in \gscript^2,y\ |\ \sigma(\escript_{n-1})) \ge C_1 C_2^d >0\\
\end{eqnarray*}
for all $y \in {\mathfrak{D}}$. Hence,
\begin{eqnarray*}
P[Y_{n,j} \in B(x,\delta)\cap \mathfrak{D}\ |\ \sigma(\escript_{n-1})] 
&\ge&\int_{B(x,\delta)\cap \mathfrak{D}} \hspace{-10mm} g_{n,j}(y\ |\
\sigma(\escript_{n-1}))\ dy \\&\ge&C_1 C_2^d\mu(B(x,\delta)\cap \mathfrak{D}) \\&\ge& C_1
C_2^d\psi_{\mathfrak{D}}(\delta)>0,
\end{eqnarray*}
for any $n \ge n_0$. 

So, Condition~[C2] also holds.
\end{proof}

Therefore, according to the results in the original MSRS paper by \cite{Regis2007SRBF}, if  Condition ~[1] and Condition ~[2] hold, the global convergence of MSRS will be obtained  in a probabilistic
sense, and the theorem is restated as follows. 

\begin{theorem} \label{Thm:convergence}
Let $f$ be a function defined on ${\mathfrak{D}} \subseteq
\mathbb{R}^d$ and suppose that $x^*$ is the unique global
minimizer of $f$ on $\mathfrak{D}$ in the sense that $f(x^*) = \inf_{x
\in \mathfrak{D}}f(x)>-\infty$ and $\inf_{\substack{x \in {\mathfrak{D}} \\
\|x-x^*\| \ge \eta}} f(x) > f(x^*)$ for all $\eta > 0$. Suppose
further that the MSRS method generates the random vectors
$\{X_n\}_{n \ge 1}$ and $\{Y_{n,1},\ldots,Y_{n,t}\}_{n \ge n_0}$.
Define the
sequence of random vectors $\{X_n^*\}_{n \ge 1}$ as follows:
$X_1^*=X_1$ and $X_n^*=X_n$ if $f(X_n) < f(X_{n-1}^*)$ while
$X_n^* = X_{n-1}^*$ otherwise. Then $X_n^* \longrightarrow x^*$
almost surely.
\end{theorem}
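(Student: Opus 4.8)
The plan is to reduce the almost-sure convergence of the iterates $X_n^*$ to the convergence of their objective values, and then to exploit Conditions~[1] and~[2] to force these values down to the global minimum. First I would observe that, by construction, $\{f(X_n^*)\}_{n\ge 1}$ is non-increasing and bounded below by $f(x^*)$, so it converges almost surely to some random limit $L \ge f(x^*)$. The separation hypothesis $\inf_{\|x-x^*\|\ge\eta} f(x) > f(x^*)$ then lets me transfer value-convergence back to point-convergence: if $X_n^*$ failed to converge to $x^*$ on some event of positive probability, there would be $\eta>0$ and a subsequence with $\|X_{n_k}^*-x^*\|\ge\eta$, whence $f(X_{n_k}^*) \ge \inf_{\|x-x^*\|\ge\eta} f(x) =: c_\eta > f(x^*)$; since $f(X_n^*)$ is monotone this would force $L \ge c_\eta > f(x^*)$. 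Thus it suffices to prove $L = f(x^*)$ almost surely, i.e. that $f(X_n^*)\to f(x^*)$.

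The core step is to show that the evaluated iterates enter every neighborhood of $x^*$ infinitely often. Fix $\epsilon>0$; using the continuity of $f$ (the standing assumption on the objective) choose $\delta>0$ so that $f(x) < f(x^*)+\epsilon$ for all $x \in B(x^*,\delta)\cap\mathfrak{D}$. The key observation that bridges the candidate points and the actually evaluated point is that the selection rule in Step~2.3 always returns one of the $t$ candidates; hence on the event that all candidates $Y_{n,1},\ldots,Y_{n,t}$ lie in $B(x^*,\delta)\cap\mathfrak{D}$, the next evaluation point $X_{n+1}$ lies there too, no matter what the merit function $u_n$ prefers. Using Condition~[1] (conditional independence of the candidates given $\sigma(\escript_{n-1})$) together with the uniform lower bound of Condition~[2] established via Proposition~\ref{prop:C2}, I would bound the conditional probability of this event from below, uniformly in $n$, by $\prod_{j=1}^{t}\nu_j(x^*,\delta) > 0$.

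With this uniform lower bound in hand, the conclusion follows from the conditional (L\'evy) form of the second Borel--Cantelli lemma: since the conditional probabilities $P[X_{n+1}\in B(x^*,\delta)\cap\mathfrak{D}\mid\sigma(\escript_{n-1})]$ are bounded below by a fixed positive constant, their sum diverges almost surely, so the event $\{X_{n+1}\in B(x^*,\delta)\cap\mathfrak{D}\}$ occurs for infinitely many $n$ with probability one. For each such $n$ we have $f(X_{n+1}) < f(x^*)+\epsilon$, hence $f(X_n^*) < f(x^*)+\epsilon$ for all large $n$; letting $\epsilon$ run through a countable sequence tending to $0$ gives $f(X_n^*)\to f(x^*)$ almost surely, which together with the first paragraph closes the argument. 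I expect the main obstacle to be precisely this bridging step: Condition~[2] only controls the candidate points $Y_{n,j}$, whereas convergence concerns the selected (evaluated) points, and the merit-function selection is effectively adversarial; the ``all candidates in the ball'' device, which is insensitive to how $u_n$ ranks the candidates, is what renders the selection rule irrelevant to the convergence guarantee. A secondary technical point is justifying the Borel--Cantelli step for the dependent, adaptively generated sequence, for which the martingale (conditional) form of the lemma rather than the classical independent version is required.
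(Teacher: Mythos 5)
Your argument is correct, and it is essentially the standard proof of this result. Note, however, that the paper itself does not prove Theorem~\ref{Thm:convergence}: it explicitly defers to \cite{Regis2007SRBF}, stating that Conditions~[1] and~[2] imply global convergence, and the paper's own contribution is only Proposition~\ref{prop:C2}, which verifies Condition~[2] for the generalized candidate-generation scheme. What you have written is a faithful reconstruction of the omitted argument from the cited reference: the monotonicity of $f(X_n^*)$, the transfer from value-convergence to point-convergence via the separation hypothesis, the ``all $t$ candidates in $B(x^*,\delta)$'' device that makes the merit-function selection irrelevant, and the product lower bound $\prod_{j=1}^{t}\nu_j(x^*,\delta)$ obtained from Conditions~[1] and~[2]. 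One minor simplification is available: since $f(X_n^*)$ is non-increasing, you do not need the ball to be hit infinitely often, only at least once, so the L\'evy (conditional) Borel--Cantelli lemma can be replaced by the elementary tower-property bound $P[\text{no candidate ever lands in } B(x^*,\delta)\cap\mathfrak{D}] \le \lim_{N\to\infty}\bigl(1-\prod_{j=1}^{t}\nu_j(x^*,\delta)\bigr)^{N-n_0+1}=0$; your stronger infinitely-often statement is correct but not required.
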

Proposition \ref{prop:C2} indicates that the perturbation of a given coordinate can be generated via a large family of probability distributions, not limited to normal distribution and uniform distribution. In the paper by \cite{Regis2007SRBF}, the perturbations generated by the normal distribution or uniform distributions were proved to satisfy Condition [2] when all the coordinate are perturbed simultaneously.

While the global convergence is of theoretical significance, its proof here in fact depends on the probability of all the coordinates being simultaneously selected is still a positive value, though probably very small. That is to say, the probability value of each coordinate to be selected is also a balance between encouraging global exploration and local searching. In our cases, where the problem dimension is high and the available function evaluation functions are very limited, we might prefer small probability values, which squint more toward local searching. 


\section{Using Sensitivity Information when Generating Candidate Points} \label{Sec:SA_SO}
As for the specific implementation of Step $2.2$ in SO-SA, we adopts the new definition of ``neighborhood", which is measured by the number of the coordinates perturbed from the current best solution. The similar idea has been applied in  the DDS (Dynamically Dimensioned Search) algorithm by \cite{Tolson07DDS} and the DYCORS algorithm \cite{Regis11DCS}. 
%
Specifically, the random candidate points are obtained by perturbing only a subset of the coordinates of the current
best solution. Moreover, the probability values of perturbing different coordinates are the same for each iteration, and  decrease as the algorithm reaches the
computational budget.

In this paper, we further propose to 
set
 different probability values for different coordinates according to the local sensitivity information of the current best solution estimated on the current response surface. The motivation is that along the more sensitive coordinates, we would like to put more dense candidate points in order for the refinements while putting less on less sensitive ones, i.e. setting smaller number of the perturbed coordinates. 
In particular, we rank the coordinates  from the most sensitive ones to least sensitive ones, and  the more sensitive ones are assigned a higher probability value to be chosen and the less sensitive ones are given a much smaller value when generating a candidate point. 
%
%
%
The detailed description of our adopted particular sensitivity analysis methods is presented in Section \ref{subsec:SA} and we call this specific implementation of MSRS as SO-SA. 

The detailed description of Step $2.2$ in SO-SA is presented as follows.


\ni\hrulefill

\ni { {\bf Step 2.2 } {\bf (Generate Random Candidate Points )}}\\
\begin{itemize}

\item[] {\bf Step 2.2.1} {\bf(Perform a Local Sensitivity Analysis)} A prescribed sensitivity analysis method is performed on the response surface $s(x)$ around the current best solution $x^*_{n}$.
\item[] {\bf Step 2.2.2} {\bf(Determine the Probability  for Each Coordinate to be Selected as the Perturbed Coordinate)}
According to local sensitivity analysis of the current best solution, the more sensitive coordinate has a higher probability value and the less sensitive coordinate has a lower probability. The probability of the $i-$th coordinate for denoted as $p_{n,i} \in [C_1,1]$, where $C_1$ is a prescribed positive number, which might be very small. 
\item[] {\bf Step 2.2.3} {\bf(Generate Random Candidates Points based on Local Sensitivity Analysis)}  Randomly generate $t$ points $\Omega_n=\{y_{n,1},\ldots,y_{n,t}\}$
in $\mathbb{R}^d$.
For each $j=1,\ldots,t$, if $y_{n,j} \not\in \mathfrak{D}$, then
replace $y_{n,j}$ by the nearest point in $\mathfrak{D}$. We refer to
the points in $\Omega_n$ as {\it candidate points}. For each candidate point $y_{n,j}$, its generating procedure is listed
as follows.

\begin{itemize}
\item[] {\bf Step  2.2.3(1)} {\bf(Select Coordinates to be Perturbed)}
For each coordinate $i$ $(i=1,2,\ldots, d)$, generate a random value $\omega_{n,j,i}$ via uniform distribution in [0,1];
 Denote   coordinates to be perturbed as  the set $I^{n,j}_{perturb}\doteq\{i: \omega_{n,j,i} \le p_{n,i}\}$. 

\item[] {\bf Step 2.2.3(2)} {\bf(Determine the Standard Deviations for Generation of the Perturbations)}. For each selected coordinate, the perturbation
is generated via normal distribution with mean being 0, and
 the standard deviation being randomly selected from a series of candidate values ranging from large to small. 
Therefore, different perturbed coordinates might have different standard deviations.

\item[] {\bf Step  2.2.3(3)} {\bf(Generate a Candidate Point)} Generate the $j$th candidate point $y_{n,j}$ 
by $y_{n,j}=x_{n}^* +\delta^n_{j}$ where $\delta^n_{j,i}=0$ for all $i\not\in I^{n,j}_{perturb}$ and $\delta^n_{j,i}$ is a normal random variable with mean $0$ and standard deviation settled in Step 2.2.3(2),  for all $i \in I^{n,j}_{perturb}$. 

\item[] {\bf Step  2.2.3(4)} {\bf(Ensure Candidate Point is in Domain)} If $y_{n,j} \not \in \mathfrak{D}$, replace $y_{n,j}$ by a point in $\mathfrak{D}$ obtained by performing successive reflection of $y_{n,j}$ about the closest point on the boundary of the hypercube $\mathfrak{D}$.

\end{itemize}
\end{itemize}

\ni\hrulefill

\subsection{Local Sensitivity Analysis on the Response Surfaces} \label{subsec:SA}
While there have existed a variety of  local sensitivity analysis methods,  ones can choose appropriate methods according to their own preference. We just use the sensitivity indices based on the existing univariate perturbations and bivariate perturbations as showed in this paper. 

Given a function $y(x)$ and a point $\bar{x}\in \mathbb{R}^d$, the first kind of sensitivity index is merely the univariate perturbations, or central finite difference,  as follows:
\begin{equation}\label{Def:SIkR}
SI_{i}^{1,\Delta}=\left|
y(\bar{x}^{(i^{+},\Delta)})-y(\bar{x}^{(i^{-},\Delta)})\right|
\end{equation}
where
%
%
\begin{equation}
\bar{x}^{(i^{+},\Delta)}=[
\bar{x}_1, \ldots, \bar{x}_{i-1}, \bar{x}_i+\Delta, \bar{x}_{i+1}, \ldots, \bar{x}_d ]
\end{equation}
and
\begin{equation}
\bar{x}^{(i^{-},\Delta)}=[
\bar{x}_1, \ldots, \bar{x}_{i-1}, \bar{x}_i-\Delta, \bar{x}_{i+1}, \ldots, \bar{x}_d ]
\end{equation}

The second and the third kinds of sensitivity indices are both based on the univariate perturbations together with the
bivariate perturbations. We consider the following matrix $L^\Delta$, which consists of elements $L^\Delta_{i,j}$ that is the
largest among all possible perturbations along coordinates $i$ and $j$ in terms of the absolute value. 
The definition of $L^\Delta$ is as follows:

\begin{eqnarray*} L^{\Delta}=
\left(
  \begin{array}{cccc}
     \ell^{\Delta}_{1,1} &\ell^{\Delta}_{1,2}& \ldots & \ell^{\Delta}_{1,d} \\
    \ell^{\Delta}_{2,1} &\ell^{\Delta}_{2,2} & \ldots & \ell^{\Delta}_{2,d} \\
    \vdots & \vdots & \ddots & \vdots \\
    \ell^{\Delta}_{d,1} & \ell^{\Delta}_{d,2} & \ldots & \ell^{\Delta}_{d,d} \\
  \end{array}
\right)
\end{eqnarray*}
where
\begin{eqnarray}\nonumber
\ell^{\Delta}_{i,j}&=&\max(|y(\bar{x}^{(i^{+},j^{+},\Delta)})-y(\bar{x})|,  |y(\bar{x}^{(i^{-},j^{+},\Delta)})-y(\bar{x})|, |y(\bar{x}^{(i^{+},j^{-},\Delta)})-y(\bar{x})|, |y(\bar{x}^{(i^{-},j^{-},\Delta)})-y(\bar{x})|)\\\nonumber
\ell^{\Delta}_{i,i}&=&\max(|y(\bar{x}^{(i^{+},\Delta)})-y(\bar{x})|, |y(\bar{x}^{(i^{-},\Delta)})-y(\bar{x})|)
\end{eqnarray}
\begin{eqnarray}\nonumber
\bar{x}^{(i^{+},j^{+},\Delta)}&=&[
\bar{x}_1, \ldots, \bar{x}_{i-1}, \bar{x}_i+\Delta, \bar{x}_{i+1}, \ldots, \bar{x}_{j-1}, \bar{x}_j+\Delta, \bar{x}_{j+1}, \ldots,\bar{x}_d ]\\\nonumber
\bar{x}^{(i^{+},j^{-},\Delta)}&=&[
\bar{x}_1, \ldots, \bar{x}_{i-1}, \bar{x}_i+\Delta, \bar{x}_{i+1}, \ldots, \bar{x}_{j-1}, \bar{x}_j-\Delta, \bar{x}_{j+1}, \ldots,\bar{x}_d ]\\\nonumber
\bar{x}^{(i^{-},j^{+},\Delta)}&=&[
\bar{x}_1, \ldots, \bar{x}_{i-1}, \bar{x}_i-\Delta, \bar{x}_{i+1}, \ldots, \bar{x}_{j-1}, \bar{x}_j+\Delta, \bar{x}_{j+1}, \ldots,\bar{x}_d ]
\\\nonumber \bar{x}^{(i^{-},j^{-},\Delta)}&=&[
\bar{x}_1, \ldots, \bar{x}_{i-1}, \bar{x}_i-\Delta, \bar{x}_{i+1}, \ldots, \bar{x}_{j-1}, \bar{x}_j-\Delta, \bar{x}_{j+1}, \ldots,\bar{x}_d ]
\end{eqnarray}

Once $L^{\Delta}$ is obtained, we first run eigenvalue decomposition (EVD) of it, which is typically symmetric.
For symmetric matrices, singular value decomposition (SVD) and eigenvalue decomposition (EVD) are almost the identical, expect that the singular values are the absolute value of the eigenvalues, and therefore always non-negative.
The eigenvalue vector corresponding to the largest-magnitude eigenvalue value is the direction along which the function $y(x)$ changes most dramatically.
Therefore, the absolute value of this eigenvalue vector is chosen as the sensitivity measure $SI^{2,\Delta}$.
The probability of coordinate $i$ $(i=1, \ldots, n)$ selected to be perturbed can be   proportional to the magnitude of $SI_i^{2,\Delta}$. Here,  the above idea shares much in common with the EVD or SVD based principle component analysis, where the principle components (eigenvectors) explain most of the data variation. The difference is that we are not just generate the random candidate points exactly along the direction of the principle component. Instead, for each coordinate, the perturbation frequency when generating the random candidate is based on the principle component direction.

In this paper, $y(x)$ is chosen as the current response surface $s_n$, instead of the  original function $f(x)$, due to its cheap evaluation. Meanwhile a bigger step size $\Delta$ is allowed, since we are interested in the larger neighborhood of $\bar{x}$ and do not want to be stuck in the small vicinity of $\bar{x}$. 

Now we have two sensitivity measures $SI^{1,\Delta}$, and $SI^{2,\Delta}$.
 They might not coincide with each other, but they all  provide us with some useful ranking information of each coordinate.   Therefore, on each iteration $n\ge n_0$, some candidate points are generated following the information of $SI^{1, \Delta}$, some are following the information of $SI^{2,\Delta}$. 





%

\section{Computational Experiments} \label{Sec:Exp}
In this section, we will compare with state of the art alternative algorithms designed for minimizing computationally expensive functions, on extensive benchmark test problems and a real problems. A brief overview of these alternative algorithms is first given as follows: 
\subsection{Alternative Optimization Algorithms}
 The alternative algorithms to be compared include an original implementation of MSRS by \cite{Regis2007SRBF}, named LMSRBF,
 where all coordinates are perturbed simultaneously via normal distributions when generating a candidate point and the response surface is radial basis function interpolation.    We will briefly introduce other alterative algorithms, which come from many different categories of algorithms for global optimization.  Most of them have also been incorporating the idea of the response surface approximation  for the purpose of minimizing computationally expensive functions, except the dynamically dimensioned search algorithm (DDS, for short) developed by \cite{Tolson07DDS}, which is a response surface-free algorithm for computationally expensive objective functions. For the limitation of length, we could not cover all of the related algorithms. For example,   the RBF method by \cite{Gutmann01RBF} and its variants are not included in this comparisons partially because previous work by \cite{Regis2007SRBF} has showed that the LMSRBF algorithm performed much better than the RBF method by \cite{Gutmann01RBF} on a wide variety of test problems.




\subsubsection{Evolutionary Algorithms}
There are three main types of evolutionary algorithms, i.e.,
 genetic algorithms, evolution strategies and evolutionary programming algorithms. In this
 study, we choose 
   ESGRBF by \cite{Regis04IEEE} and SSKm by \cite{EGEA09SSKm}, since they have been tailored for minimizing computationally expensive functions by incorporating the response surfaces to mimic the original function $f(x)$.

ESGRBF belongs to the evolution strategies (ES, for short) and it establishes an RBF model using
information from previously evaluated points and uses it to screen out offspring
that are promising, thereby reducing the computational effort required in the ES.  For
a $(\mu, \lambda)$-Evolution Strategy (or simply $(\mu, \lambda)$-ES) (\cite{Back95evolutionstrategies}),  each generation consists of $\mu$ parents that produce $\lambda$ offspring via crossover and
 mutation. 
In the ESGRBF algorithm, we fit an RBF model  to screen out promising offspring.  In particular, an (m, $\mu$, $\lambda$, $\nu$)-ESGRBF is essentially a ($\mu, \lambda$)-ES that uses an RBF model to estimate the objective
function values of all the offspring, find out the $\nu$ offspring with the best RBF approximation values, and perform function evaluation of the  expensive objective function only on the selected offspring.  The $\mu$ parent solutions for the next generation
are then selected to be the best $\mu$ solutions from the $\nu$ offspring that are evaluated in the current
generation. The parameter $m$ is the size of the initial experimental design, which is  used to initialize the RBF model. 

Scatter search (\cite{Glover98SS})  is an another important evolutionary approach which originated from strategies for creating composite decision rules and response surface constraints. 
A new version of scatter search called SSKm (Scatter Search with Kriging for Matlab ) by \cite{EGEA09SSKm} presented for computationally expensive objective functions. The Kriging response surface implemented in SSKm helps avoid wasteful evaluations that are unlikely to provide high quality function values, thus effectively reducing the number of true function evaluations required to find the vicinity of the global solution.

\subsubsection{MultiStarts+LocalMinimization}


Some of the local minimization algorithms performed reasonably well and even
better than some algorithms meant for global optimization if the goal is to make quick progress
within a relatively limited number of function evaluations.  The multi-starting scheme is only performed to
 determine the starting points for the local minimization runs (i.e., through a space-filling design at
 the beginning or to select the points for the restarts), though the restarts is performed only very few times,
or even not any, due to the very limited number of function evaluations of $f(x)$.

 In the numerical experiments below, we use the following local minimization solver,  the
 NOMADm software  \url{http://www.gerad.ca/NOMAD/Abramson/nomadm.html},
 which is a Matlab implementation of the Mesh Adaptive Direct Search (MADS) algorithm by \cite{Audet06MADS}. In this study, we run NOMADm with the option  that uses the DACE response surface model by \cite{Lophaven02Kriging}  to make it suitable for computationally
 expensive functions. DACE stands for ``Design and Analysis of Computer Experiments," which is
 a methodology that uses kriging interpolation to model the outcome of a computer experiment.
 When searching over the Kriging response surface, we choose to use the Evolution Strategy with Covariance Matrix Adaptation (``CMA-ES''), which 
adopts a covariance matrix to explicitly rotate and scale the mutation distribution that works as strategy parameters.  This particular implementation of ``MADS'' was  referred as NOMADm-DACE in this paper.   The MADS algorithm has incorporated schemes to get out of the local trap to find an even better solution to a certain degree, though it is not necessarily  the global solution.



\subsubsection{EGO}
The efficient global optimization (EGO, for short) by  \cite{Jones98EGO} has been widely used for minimizing computationally expensive functions.
The EGO algorithm begins by first generating a number of
function evaluations at points generated via a Latin Hypercube (i.e., a space-filling design), and utilizes the DACE stochastic process model, i.e. Kriging to approximate $f(x)$ based on these available function evaluations, where ``DACE'' is an acronym for ``Design and Analysis of Computer Experiments''. EGO consists of two main components. The first one is that given data points, how to establish a Kriging model as follows:
\begin{equation} \label{eq:kriging}
r(x)= \mu +\sum_{i=1}^{p} b_i \exp \left[ -\sum_{h=1}^{d} \theta_h |x_h-x_h^{(i)}|^{p_h}\right]
\end{equation}
where $\theta_h \ge 0, p_h \in [0,2], h=1,\ldots,d$.
      The DACE (\ref{eq:kriging}) has up to $2d+2$ parameters: $\mu, \sigma^2, \theta_1, \ldots, \theta_d$ and $p_1, \ldots, p_d.$, and $b_i$ can be easily calculated once we have obtained these $2d+2$ parameters.  
%
The second component is related with the determination of the next function evaluation point based on the current Kriging model via
maximizing what is called ``the expected improvement'', which
aims to weigh
up both the predicted value of solutions, and the error in this
prediction, in order to automatically balance exploitation and exploration.  

\subsubsection{DDS}

Dynamically Dimensioned Search (DDS) algorithm, which is a response surface-free algorithm, was proposed by \cite{Tolson07DDS} and it main idea is to dynamically and randomly select the perturbed coordinates when generating the next function evaluation point based on the current best solution.
DDS
searches globally at the start of the search and becomes a
more local search as the number of iterations approaches the
maximum allowable number of function evaluations. The
transition from global to local search is achieved by
dynamically and probabilistically reducing the number of perturbed coordinates
from the current best solution.  The selection of the subset of coordinates for perturbation is completely at
random without reference to sensitivity information, and the perturbation is generated via normal distribution.
DDS is a greedy type of algorithm since the current
solution, also the best solution identified so far, is never
updated with a solution that has an inferior value of the
objective function.  Despite its simplicity,
 DDS has been shown
to be a very effective global optimization algorithm for computationally expensive objective functions in many applications.

\subsubsection{DYCORS}
DYCORS is a specific implementation of MSRS. The major difference
between DYCORS and LMSRBF is that DYCORS is more suitable for large dimensional problems ($> 30$ dimensions) since it does not
perturb all variables of the best point found so far in order to create candidate points, but rather borrow the idea of DDS, i.e.
each variable is perturbed with probability
$P(n)=p_0 \left[1-\frac{\log(n-n_0+1)}{\log(N_{\max}-n_0)}\right]$
for all $n_0 \le n  \le N_{\max}$, and where $n_0$ is the number of points in the initial experimental design, $p0 = \min(1, 20/d)$, $n$ is the iteration number, and $N_{\max}$ is the maximum number of allowed
evaluations for the optimization. Hence, the probability of perturbation for each variable decreases
as the optimization advances (as $n$ grows). It is ensured that at least one variable is perturbed.

\subsection{Test Problems}
\subsubsection{Synthetic  Problems}
These test problems are not really expensive to evaluate, but sill widely used for meaningful
 comparisons of performance for the different algorithms by pretending that these functions are
 computationally expensive. This can be done by keeping track of the best function values obtained
 by the different algorithms as the number of function evaluations increases. The relative performance
 of algorithms on these test problems are expected to be similar to the relative performance of these
 algorithms on truly expensive functions that have the same general shape as our test problems.
 Summary of test problems are in Table 1. While the main motivation of our research is on the relatively high dimensional problems, we still also test our new algorithm SO-SA on several low dimensional problems as well as high dimensional problems. We will see that SO-SA is also effective for low dimensional problems. 

\begin{table}
\centering
\begin{tabular}{|c|c|c|}
  \hline
  Test Funs & Domain & Global Min  \\\hline
   Ackely30 & $[-15,20]^{30}$ & -20-e \\
  Rastrigin30 & $[4,5]^{30}$ & -30 \\
  Levy30 & $[-5,+5]^{30}$ & $<-11$ \\
   Keane30 & $[1,10]^{30}$ & $<-0.39$ \\
  \hline
\end{tabular} \hspace{0.1cm}\begin{tabular}{|c|c|c|}
  \hline
  Test Funs & Domain & Global Min  \\\hline
   Michalewicz30 & $[0,\pi]^{30}$ & $<-23$ \\
  Schoen35 & $[0,1]^{30}$ & $<-80$ \\
  TB32& $[0,1]^{32}$ & 0\\
   \hline
\end{tabular}\label{tab:testproblems}\caption{Summary of test problems}
\end{table}



\subsubsection{Town Brook Watershed  Problem}
The Town Brook watershed is a 37 km2 subregion of the Cannonsville watershed (1200 km2)
in the Catskill Region of New York State. The time series Y of measured stream flows
and total dissolved phosphorus (TDP) concentrations used in the analysis contains 1096
daily observations (from October 1997 to September 2000) based on readings by the U.S.
Geological Survey for water entering the West Branch of the Delaware River from the Town
Brook watershed. We used the SWAT2005 simulator (\cite{ARNOLD12SWAT}), which has been
used by over a thousand agencies and academic institutions worldwide for the analysis of water flow and nutrient transport in watersheds. 
The water draining the
Town Brook and rest of the Cannonsville watershed collects in the Cannonsville Reservoir,
from which it is piped hundreds of miles to New York City for drinking water. Water quality
is threatened by phosphorus pollution and, if not protected, could result in the need for a
New York City water filtration plant estimated to cost over $8$ billion. For this economic
reason as well as for general environmental concerns, there is great interest in quantifying
the parameter uncertainty for this model. The input information of the Town Brook simulator is discussed briefly in \cite{Tolson07DDS} and in more details in \cite{Tolson2007Swat}. From the computational point view, in total $32$ parameters normalized to [0, 1] will be calibrated and the objective function is the sum of $4$ terms which measure the relative distance between the model outputs and the measure data, corresponding to flow, dissolved phosphorus, sediment, and organic nitrogen transported with water, respectively.

\subsection{Experimental Setup}
\subsubsection{Parameter Settings of Involved Algorithms }
We perform all numerical computations in Matlab (R2011a) on a Dell workstation with  2.66Ghz CPU and the 32 GB of RAM. The operating system is 64-bit Windows sever 2008 R2 standard. We compared SO-SA with $6$ alternatives: LMSRBF, DDS, EGO, SSKm, NOMADm-DACE, ESGRBF.

For LMSRBF, EGO, and ESGRBF, SO-SA, a Latin hypercube design is used to initialize the RBF model with size $m=2(d+1)$.
For LMSRBF, ESGRBF, SO-SA, we all use a cubic RBF model to approximate the expensive objective function in every iteration.

As for the the ESGRBF algorithm, we set the number of parents $ \mu= 8$; the number of offspring $\lambda= 50$ and $\nu= 20$ offspring is chosen according to the RBF model.

SSKm contains several steps. In the improvement step, a local search is implemented using a carefully selected starting point and there are many options for the local search methods such as fmincon in the Matlab optimization toolbox, fminsearch, NOMADm, solnp (a SQP method by  \cite{Ye97SQP}). In this paper, we are using fmincon.  In addition,  it contains a  kriging interpolation.  Its parameter parameter is by default  performed by fminsearch, whose the maximum number of function evaluations is by default 200*d. For the rest settings, we also follow its default settings. 

As for EGO,  we adopted the implementation in the SURROGATES Toolbox developed by \cite{manual:surrogatestoolbox2p1}. It used the DACE toolbox \url{http://www2.imm.dtu.dk/~hbn/dace/} to establish the Kriging model in each iteration.  When maximizing the ``the expected improvement''  to determine the next function evaluation point, it used a differential evolution algorithm by \cite{Price05DE}. The default settings are used in the following experiments. 

As for ``NOMADm-DACE", when  choosing the search strategy, we select the option of makes use of surrogates, since we  are considering the computationally expensive functions. Among the available
 surrogates, we adopt the DACE, which aims to establish a Kriging surrogate. When searching over the surrogate, we choose to use
 ``CMA-ES", since it supports global search.  The corresponding parameters are kept as defaults.

Since we are considering the computationally expensive functions, we are mainly caring about the number of true function evaluations. In fact,
besides these expensive function evaluations, the overhead computation of SO-SA is still much less than EGO, KKim, NOMADm-DACE-CMAES and ESGRBF, though slightly higher than DDS, LMSRBF, as showed in the section \ref{sec:overhead}.

\subsubsection{Evaluation Criteria}

There are never likely to be fully accepted automated stopping criteria for stochastic search algorithms.  For that reason, we will generally emphasize algorithm comparisons and stopping based on ``budgets" of function valuations in this paper. Given the number of function evaluations which is usually not big due to the high computationally cost of the function evaluation, we compare the lowest objective function values that different algorithms can achieve.

\subsection{Results on Test Problems}

The results of the proposed SO-SA was compared with these alternatives are plotted in Figures \ref{fig:Ackley30}, \ref{fig:Ras30}, \ref{fig:Mich30}, \ref{fig:GWSS35}, \ref{fig:Keane30}, \ref{fig:Levy30},  and \ref{fig:TB32}. For each figure, the horizontal axis represents the number of performed function evaluations of $f(x)$ and the vertical axis is the current minimal function  value $f_n^*$ among already performed function evaluations. For each problem, $30$ runs are performed for each algorithm since they are mostly stochastic algorithms. The averaged current lowest function values and the corresponding standard deviations are plotted.  We can see that for all these test problems, our algorithm SO-SA always achieves lowest objective function value, and achieve significant improvements over alternatives mostly. The corresponding size of standard deviation is comparable with these alternatives, if not smaller.
\begin{figure}[h!]
\centering
\includegraphics[scale=1]{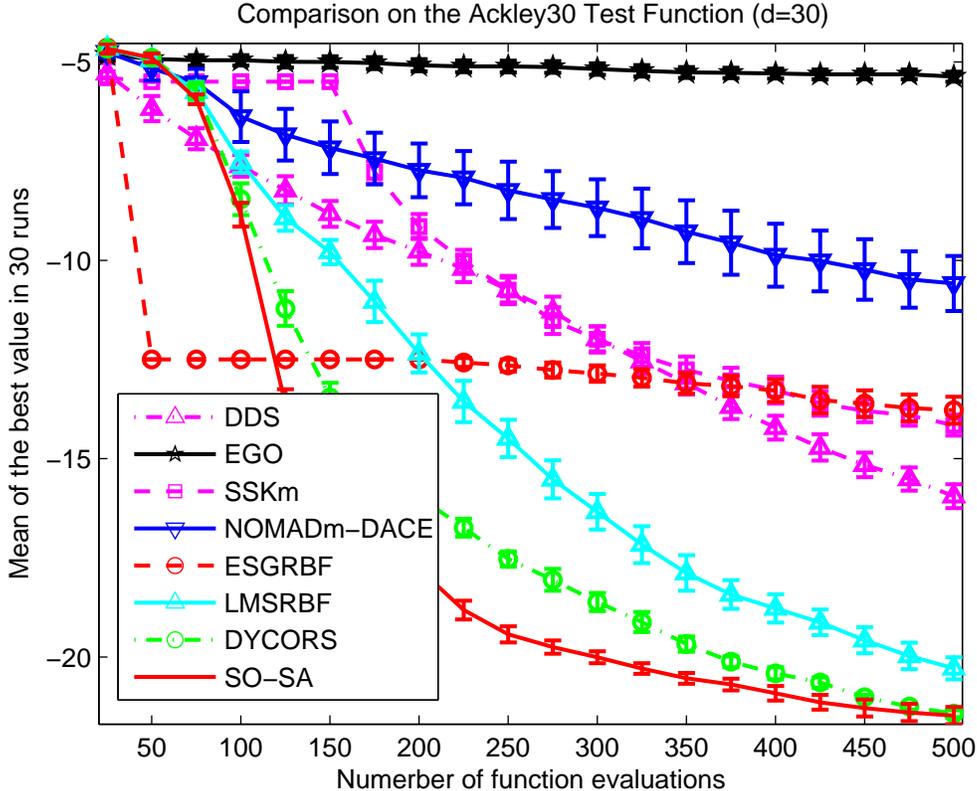}\hspace{0cm}
\caption{Comparison of Global Optimization Methods on the
Ackley Function ($d =30$)} \label{fig:Ackley30}
\end{figure}

\begin{figure}[h!]
\centering
\includegraphics[scale=1]{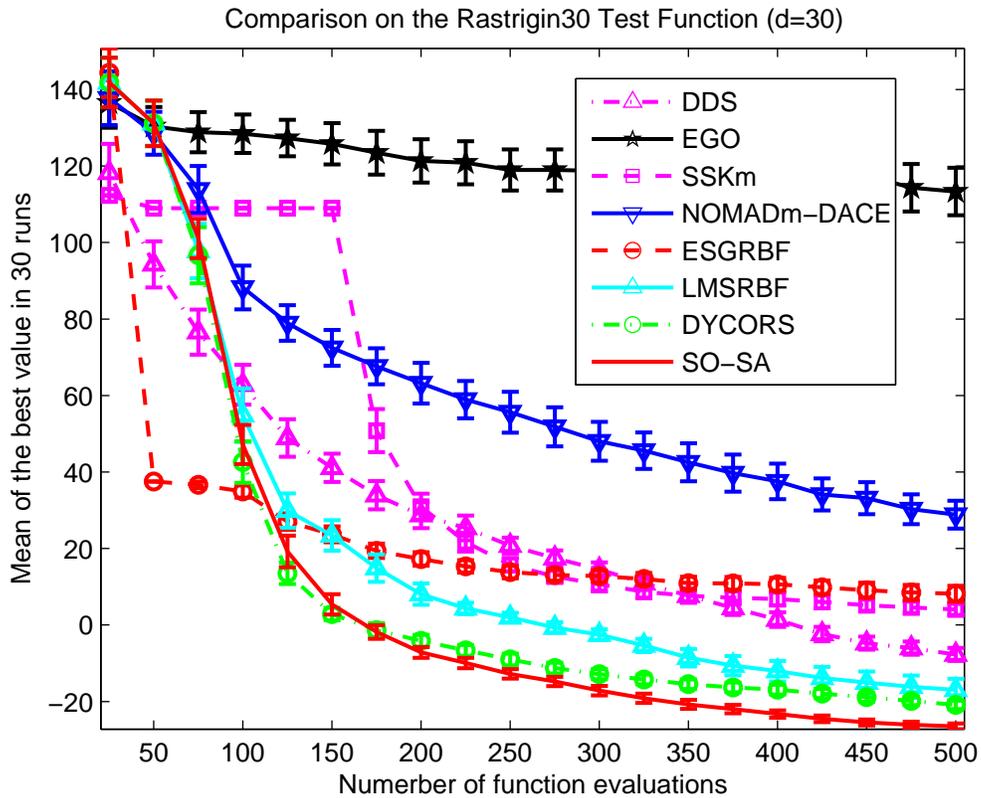}\hspace{0cm}
\includegraphics[scale=1]{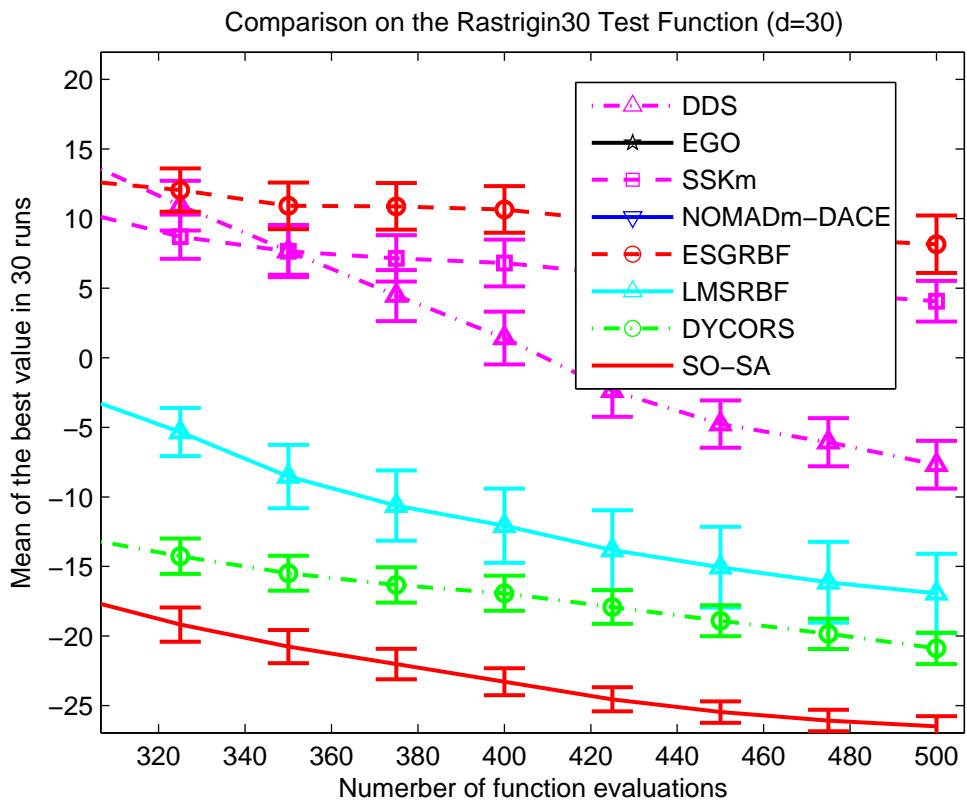}\hspace{0cm}
\caption{Comparison of Global Optimization Methods on the
Rastrigin Function ($d =30$)} \label{fig:Ras30}
\end{figure}

\begin{figure}[h!]
\centering
\includegraphics[scale=1]{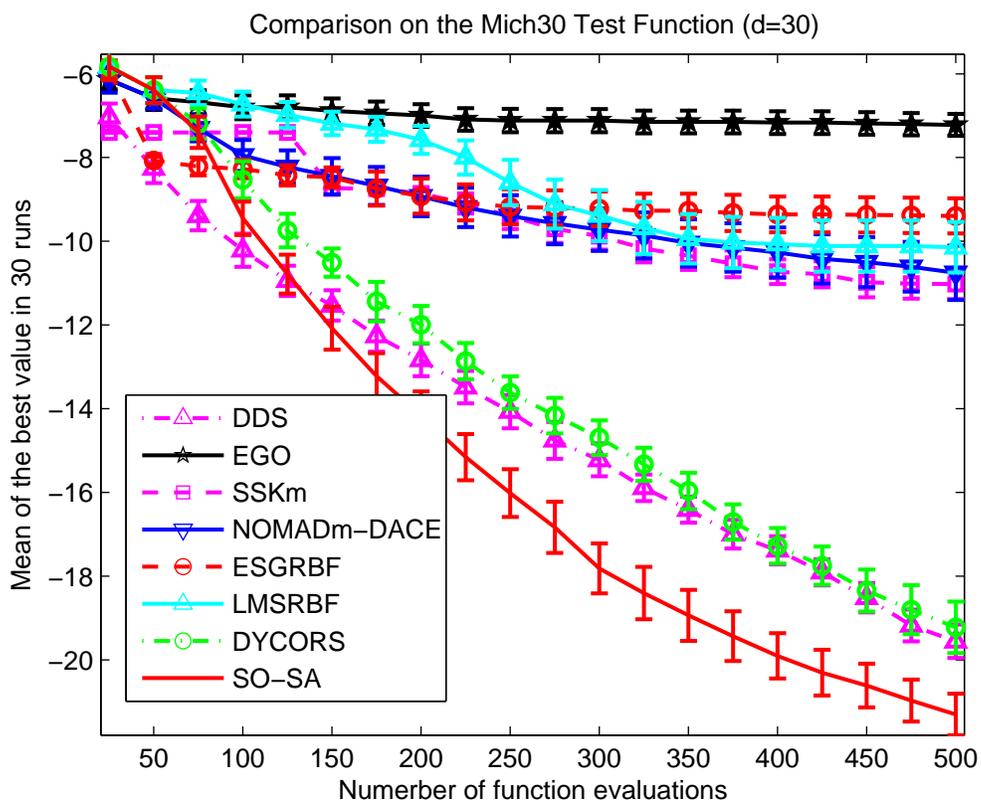}\hspace{0cm}
\caption{Comparison of Global Optimization Methods on the
Michalewicz Function ($d =30$)} \label{fig:Mich30}
\end{figure}

\begin{figure}[h!]
\centering
\includegraphics[scale=1]{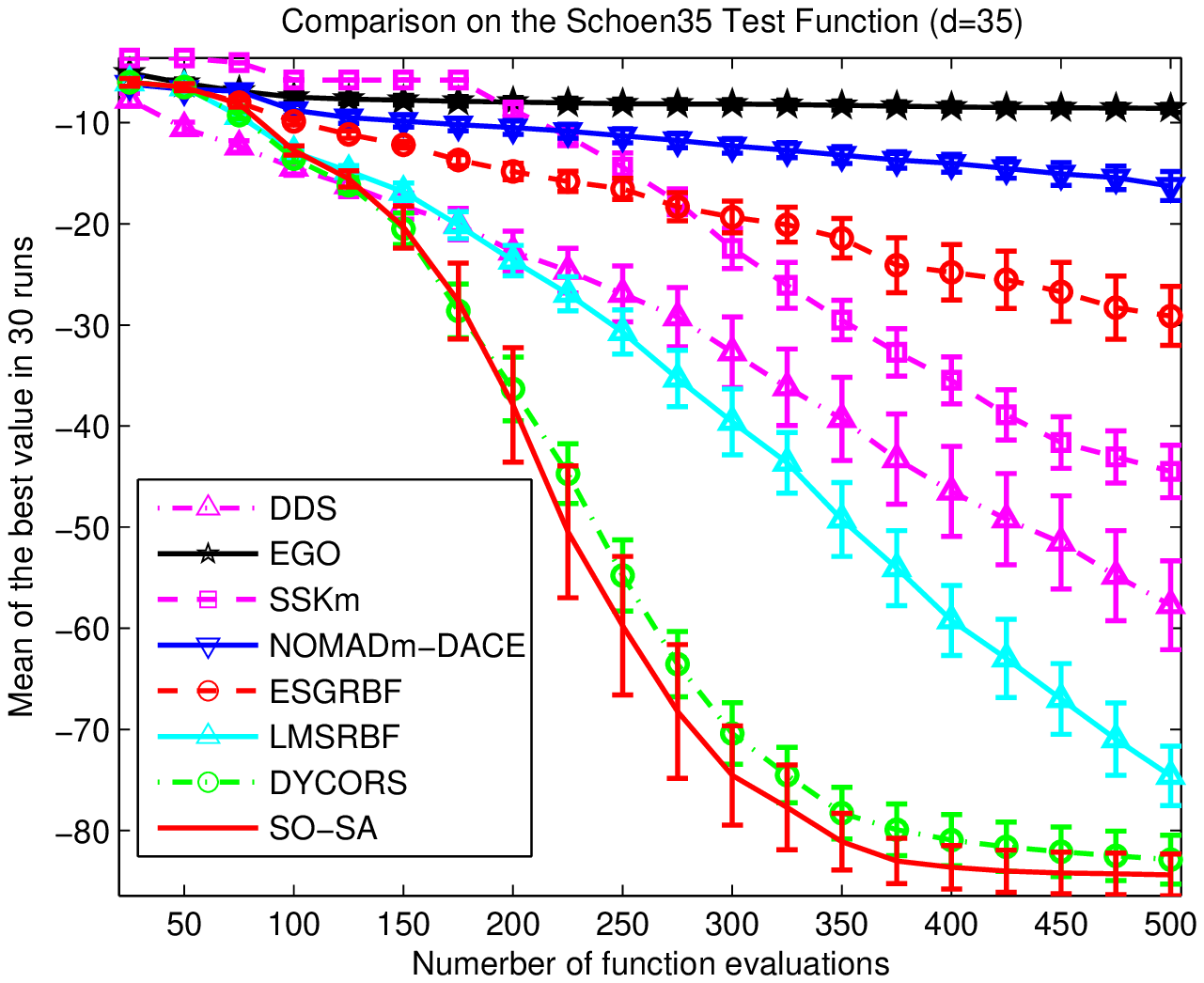}\hspace{0cm}
\includegraphics[scale=1]{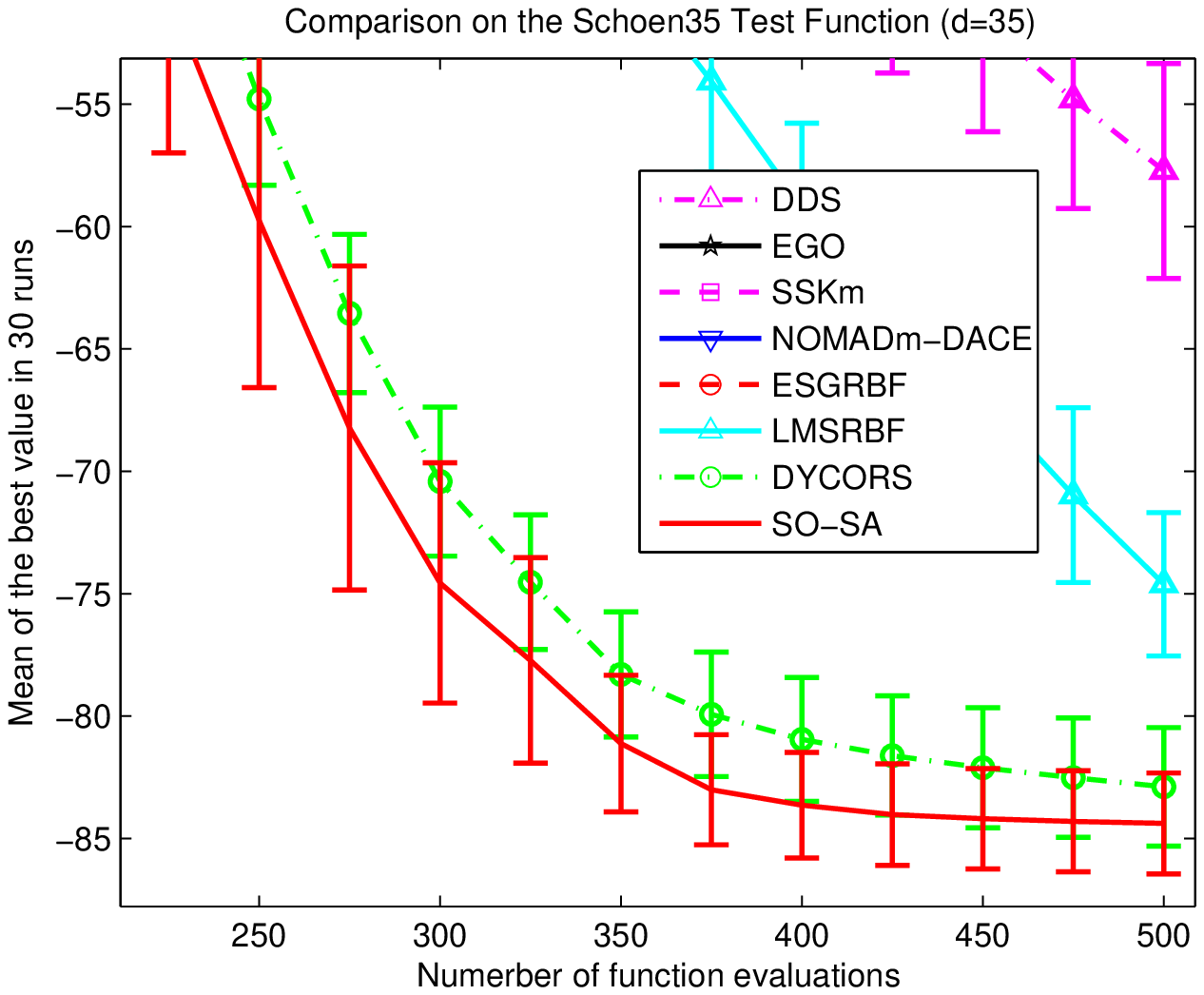}\hspace{0cm}
\caption{Comparison of Global Optimization Methods on the
Schoen Function ($d =35$). The lower one is the zoom in version.} \label{fig:GWSS35}
\end{figure}

\begin{figure}[h!]
\centering
\includegraphics[scale=1]{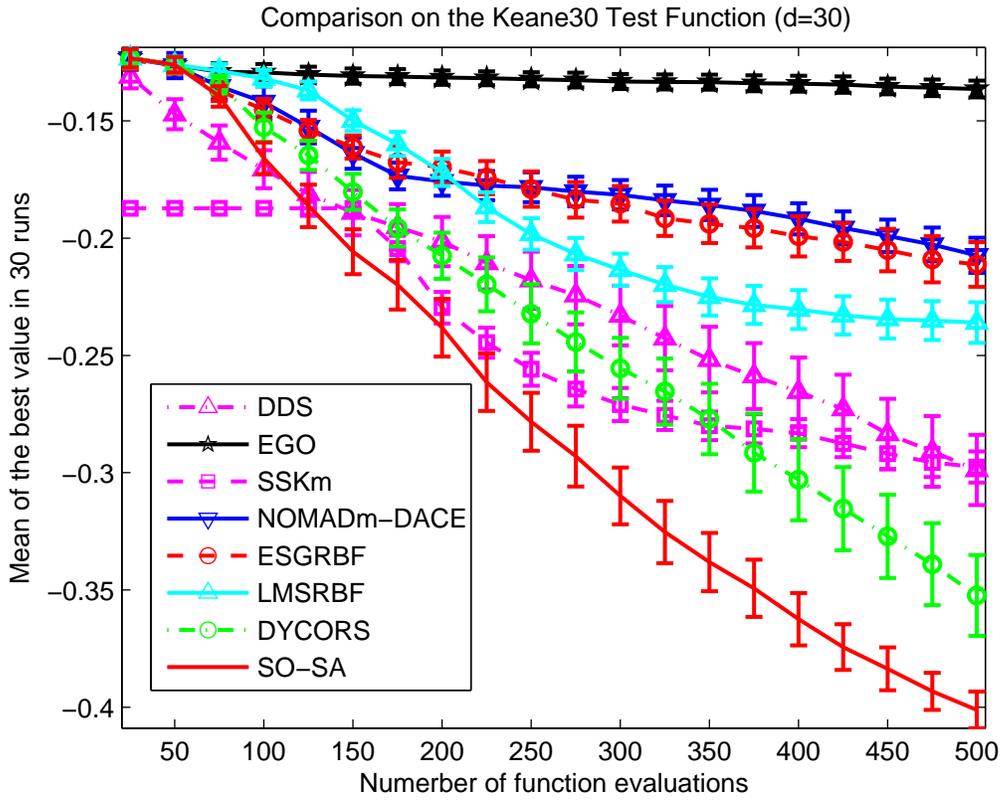}\hspace{0cm}
\caption{Comparison of Global Optimization Methods on the
Keane Function ($d =30$)} \label{fig:Keane30}
\end{figure}

\begin{figure}[h!]
\centering
\includegraphics[scale=1]{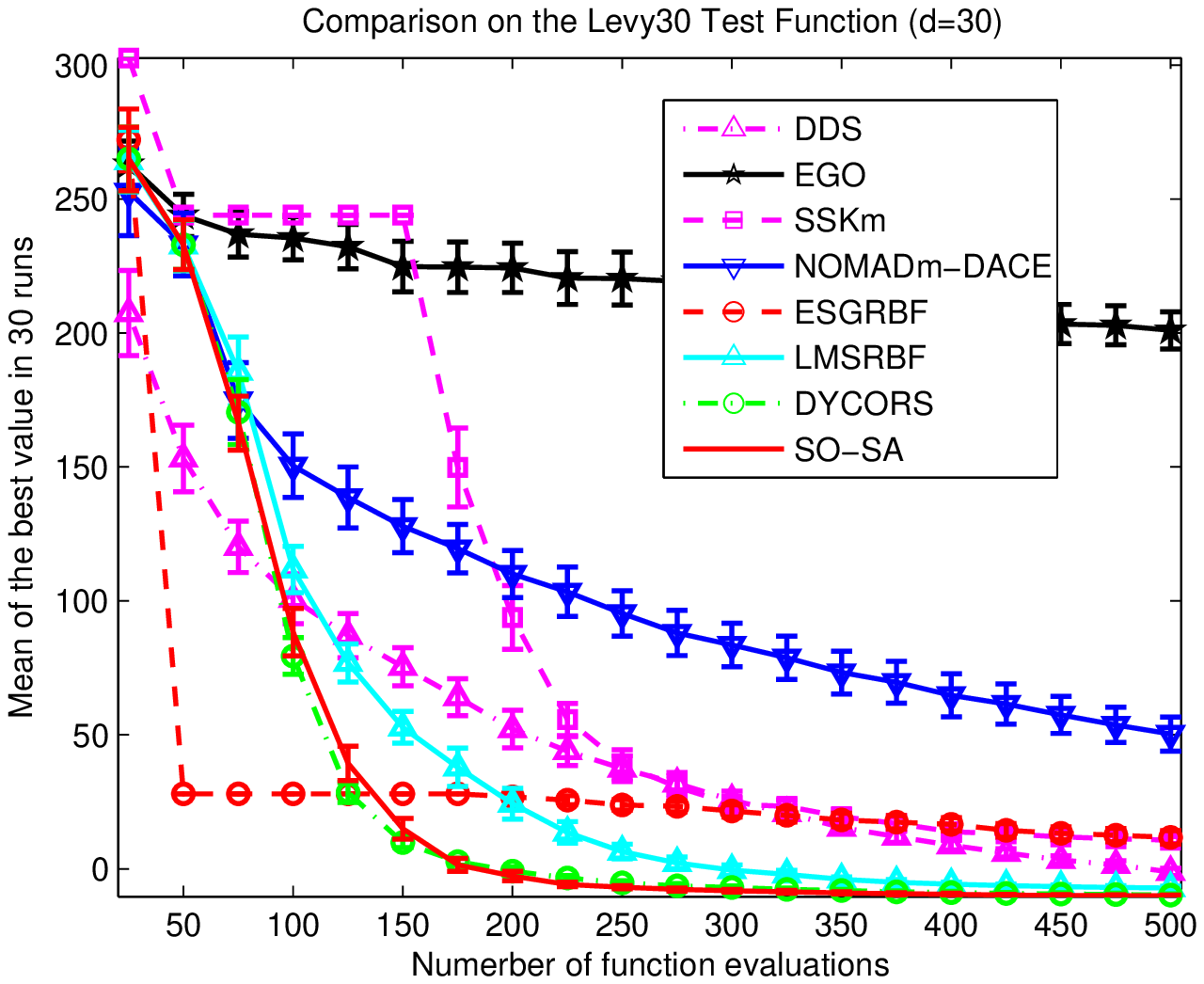}\hspace{0cm}
\includegraphics[scale=1]{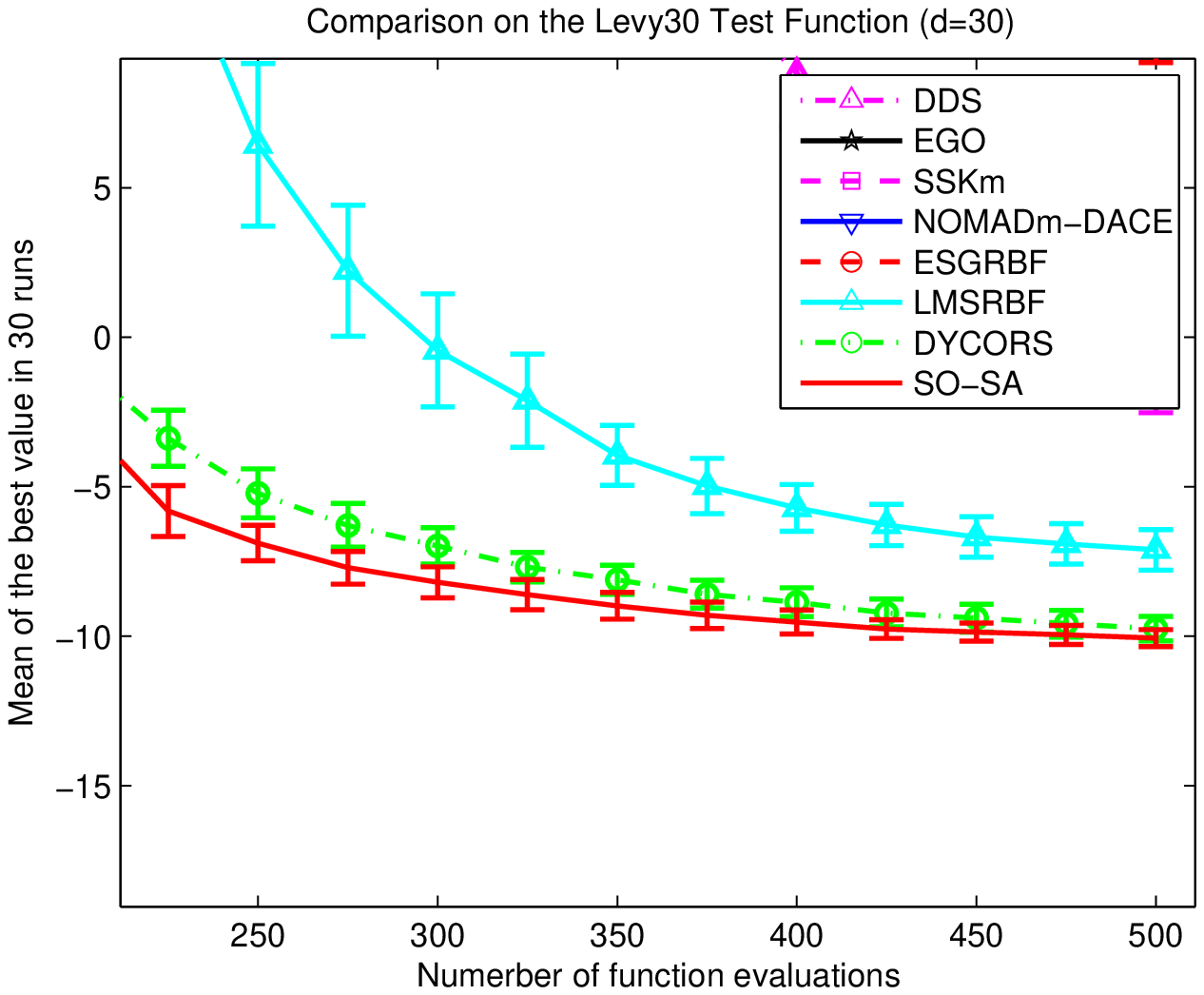}\hspace{0cm}
\caption{Comparison of Global Optimization Methods on the
Levy Function ($d =30$). The lower one is the zoom in version.} \label{fig:Levy30}
\end{figure}

\begin{figure}[h!]
\centering
\includegraphics[scale=1]{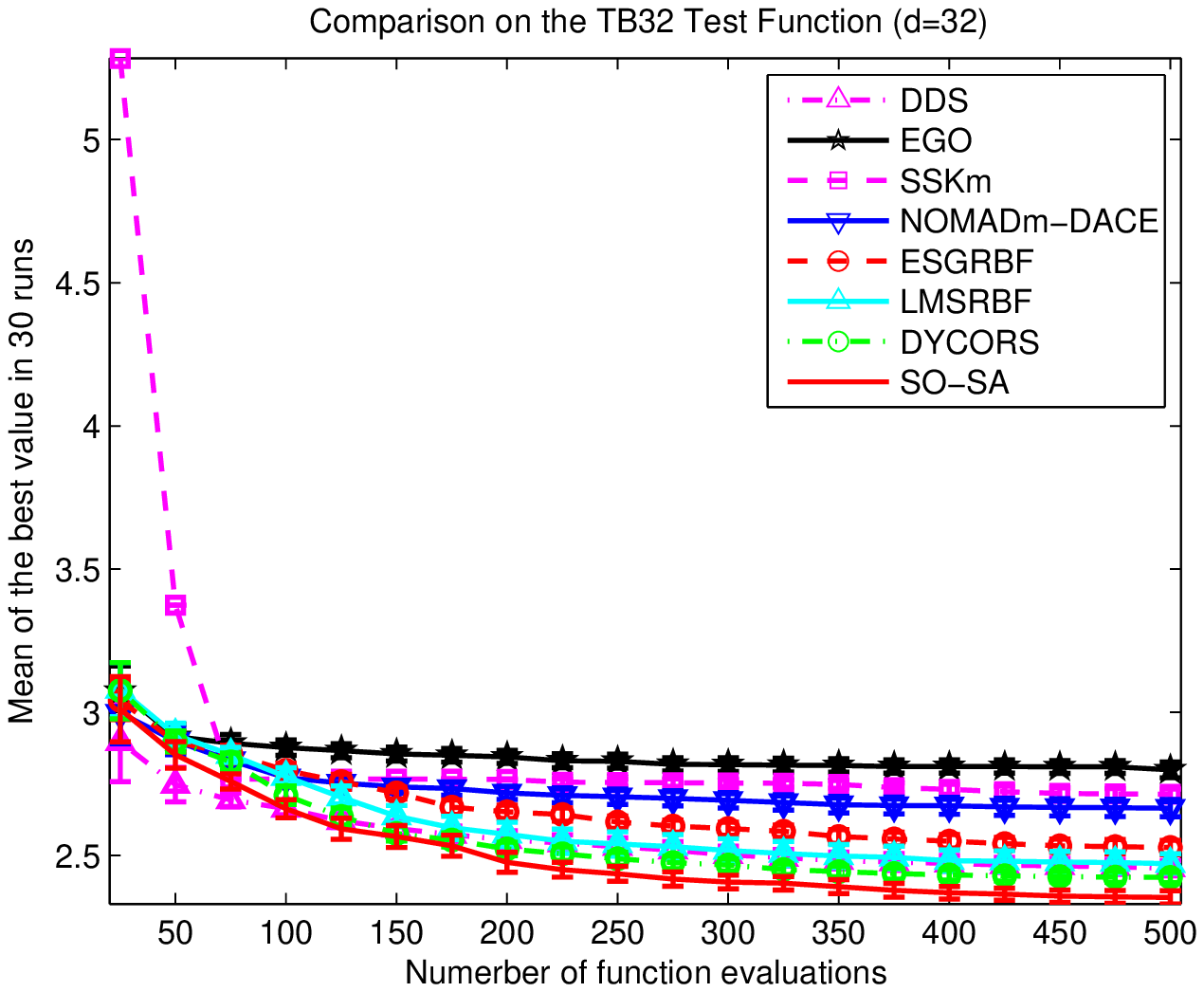}\hspace{0cm}
\includegraphics[scale=1]{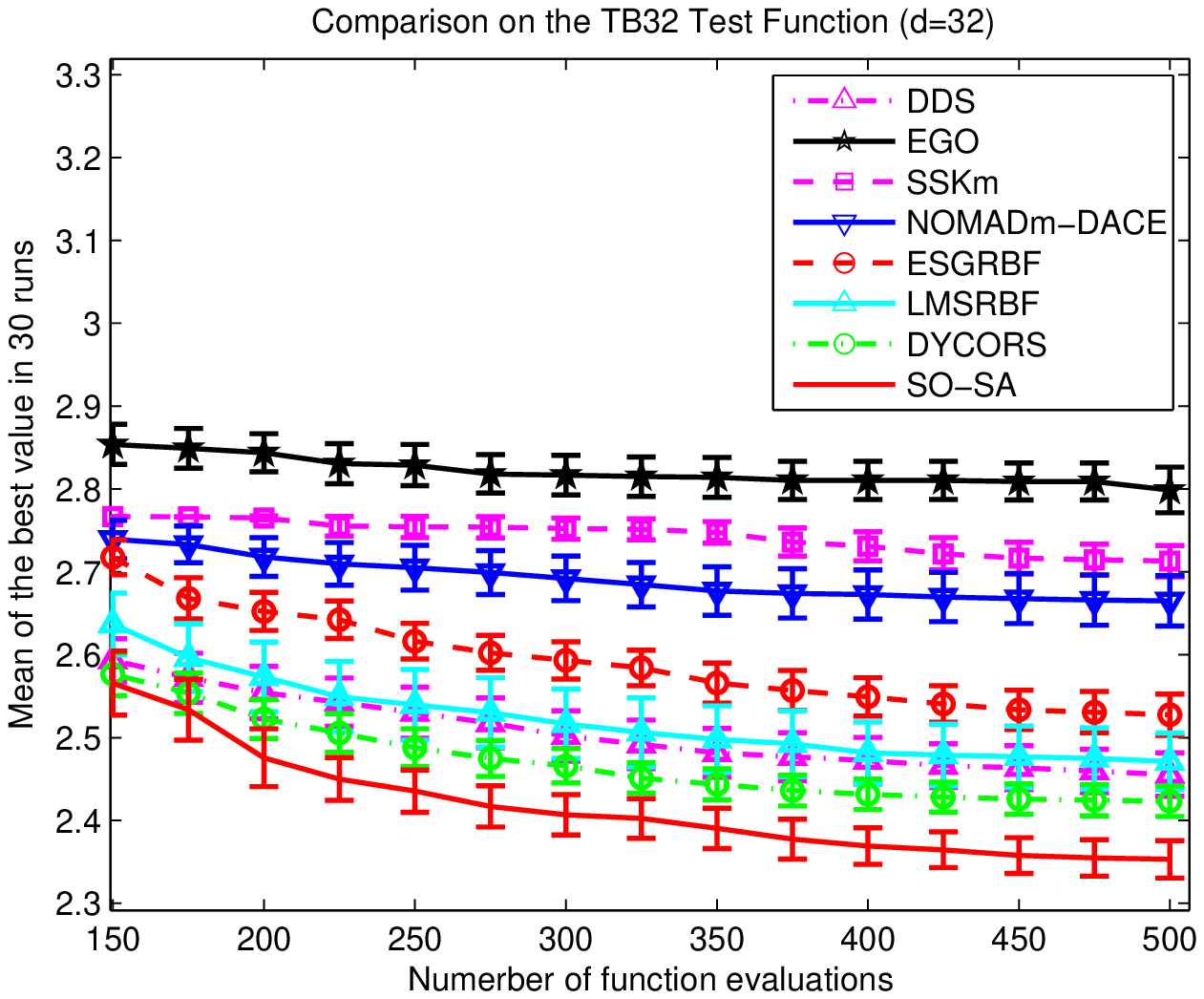}\hspace{0cm}
\caption{Comparison of Global Optimization Methods on the
Town Brook Problem ($d =32$). The lower one is the zoom in version.} \label{fig:TB32}
\end{figure}



We further  proposed a quantify $Q(A, P)$ to better describe the relative performances of different algorithms, where
$Q(A, P)$=the relative difference between the objective function value by algorithm $A$ on
problem $P$ and the best objective function value among all algorithms on problem $P$. That is to say, $Q(A,P)=\frac{|f_{best}^{A,P}-f_{best}^{P}|}{|f_{best}^{P}|}$ where $f_{best}^{P}=\min_{A} f_{best}^{A, P}$.
Let $Q(A)= \sum_{P} Q(A, P)$ and the smaller $Q(A)$ indicates the better performance of the algorithm $A$. Table \ref{tab:summary} summarized the $Q$ values for different algorithms given these testing problems. We can see that SO-SA performs the best compared with   these alternatives. We would like to point out EGO performs worse in our cases of high dimensional problems.

\begin{table}\caption{Summary of Running Results}
\begin{tabular}{|l|l|l|l|l|l|l|l|l|}
  \hline
 &  SO-SA &DYCORS&	LMSRBF	& DDS	&SSKm	&ESGRBF	&NOMADm	&EGO\\\hline
Ackley30&  -21.55 &	-21.41 	& -20.29	& -15.94	&  -14.15	 &-13.78&   -10.68	& -5.37\\\hline
Rastrigin30&-26.48& -20.88	&-16.94	& -7.69	& 4.07	& 8.16	& 28.83& 113.32\\\hline
Michalewicz30&-21.30& -19.23&	  -10.14	&-19.56& -11.02	& -9.40	&-7.22&	-7.39\\\hline
Keane30&- 0.40&	-0.35 &    -0.24	&-0.30	&-0.30	&-0.21	&-0.21&	-0.14\\\hline
Levy30 &-10.45&	 -9.75&     -7.11	&-1.14	&10.76	&11.75	&50.32&	200.01\\\hline
Schoen35&-84.39& -82.89	& -74.61	& -57.72& -44.49	&  -29.11	& -16.26&	 -8.58\\\hline
TB32&2.35 &2.43	 &2.47& 2.46	& 2.71	& 2.53&2.66 &2.79	  \\\hline
 Q(A)& 0  &  0.56  & 1.83           & 2.56      & 4.88      & 5.56      &   10.48  &  28.56  \\\hline
\end{tabular}\label{tab:summary}
\end{table}

\subsection{Short Note about the Overhead Running Time} \label{sec:overhead}
When minimizing the  computationally expensive objective function, we usually assume that the running time is mostly spent on the evaluations of the objective function $f(x)$.
Specifically, the overhead time including establishing the response surface and determining the next function evaluation is considered to be ignorable compared with the evaluations of $f(x)$. That is to say, this kind of overhead time is mainly for the algorithm to ``think" where to perform the evaluations of $f(x)$.   However, this is not always true, and  the ``thinking" time for different algorithms are quite different and even big and Table \ref{tab:overhead} is the overhead CPU time of our tested algorithms for one run where  $500$ function evaluations are performed. The ``thinking" time of our algorithm is moderate although longer than LMSRBF and DYCORS as expected due to the more complicated schemes for determine the next function evaluation point. Notice that while the ``thinking" time of SSKm and EGO is significantly longer than others, their results are  among the worst from Table \ref{tab:summary}, especially for EGO. The overhead time is independent of the running time of each evaluation  $f(x)$ and therefore whether it is ignorable often relies on the evaluation time of $f(x)$.

{\tiny
\begin{table}\caption{Summary of Overhead Time (s)}
\begin{tabular}{|l|l|l|l|l|l|l|l|l|}
  \hline
  SO-SA&DYCORS&	LMSRBF	& DDS	&SSKm	&ESGRBF	&NOMADm	&EGO\\\hline
   $1.2\times 10^3$  &350  &   300        & 0.18    &   $7.1\times 10^5$   &     2.0 & 67    & $1.8\times 10^4$   \\\hline
\end{tabular}\label{tab:overhead}
\end{table}}

\section{Conclusions}\label{Sec:future}
Response surface based global optimization algorithms have been playing a very important role for computationally expensive objective functions, arising from many practical problems, for example, parameter calibration of complex physical models. In this paper,  a stochastic response surface method named ``MSRS" proposed by \cite{Regis2007SRBF} where the next function evaluation point is chosen from a set of random candidate points, was further analyzed and extended.  Specifically,  we propose a new way to balance the local searching and global exploration by extending the way of generalizing the random candidate points via setting different probability values for each coordinate to be perturbed.  The smaller probability value is more likely to prefer local searching and vice versa. Correspondingly we present a new definition of ``neighborhood" by the number of perturbed coordinates. Based on the above new ideas,
we finally proposed a specific implementation of ``MSRS", which takes sensitivity information
into consideration when selecting the perturbed coordinate for producing the random candidate points.  Its outstanding performance over many state of the art algorithms is well demonstrated by many typical testing problems, especially for high dimensional problems. 
In the future, we will further study the features of the method of the random candidate points and extend it to many other existing algorithms, for example, EGO, which does not work well for high dimensional problems in our experiments.

\section{Acknowledgement} 
This work was supported by the Natural Science Foundation of China, Grant
Nos. 11201054, 91330201 and by the Fundamental Research Funds for the Central Universities
ZYGX2012J118, ZYGX2013Z005.


%
%
%




\bibliographystyle{ijocv081}
\bibliography{references_Surrogate_OPT,references_GO,references_kriging,RandomSearch}

\end{document}